\newtheorem{assumption}{Assumption}
\newcommand{\NN}{\mathbb N}
\newcommand{\pNN}{\mathbb N^+}
\newcommand{\RR}{\mathbb R}
\newcommand{\eps}{\varepsilon}
\newcommand{\todo}[1]{}
\newcommand{\Adomain}{\mathcal{A}}
\newcommand{\Gdomain}{\mathcal{G}}
\newcommand{\Pdomain}{\mathcal{P}}
\newcommand{\Sdomain}{\mathcal{S}}
\newcommand{\Wdomain}{\mathcal{W}}
\newcommand{\Xdomain}{\mathcal{X}}
\newcommand{\Ydomain}{\mathcal{Y}}
\newcommand{\Zdomain}{\mathcal{Z}}
\newcommand{\Grandomvar}{G}
\newcommand{\Prandomvar}{P}
\newcommand{\Urandomvar}{U}
\newcommand{\Wrandomvar}{W}
\newcommand{\Yrandomvar}{Y}
\newcommand{\Xrandomvar}{X}
\newcommand{\Zrandomvar}{Z}
\newcommand{\Avar}{a}
\newcommand{\Gvar}{g}
\newcommand{\Pvar}{p}
\newcommand{\Uvar}{u}
\newcommand{\Wvar}{w}
\newcommand{\Yvar}{y}
\newcommand{\Xvar}{x}
\newcommand{\Zvar}{z}
\newcommand{\run}{\rho}
\newcommand{\head}{1}
\newcommand{\tail}{0}
\newcommand{\prob}{\mathbb{P}}
\newcommand{\expe}{\mathbb{E}}
\newcommand{\indi}{\mathbf{1}}
\newcommand{\distr}{\Delta}
\newcommand{\env}{\theta}
\newcommand{\Env}{\Theta}
\newcommand{\fair}{\varphi}
\newcommand{\Mon}{\mathfrak{M}} 
\newcommand{\Enf}{\mathfrak{E}}
\newcommand{\Intervals}{\mathcal{I}}
\newcommand{\interval}{I}
\newcommand{\conf}{\delta}
\newcommand{\error}{\varepsilon}
\newcommand{\cost}{\ell}
\newcommand{\mc}{\mathcal{M}}
\newcommand{\regE}{\mathtt{R}}
\newcommand{\regC}{\mathtt{C}}
\newcommand{\hori}{h}
\newcommand{\cross}{\ding{55}}
\begin{document}
\title{Algorithmic Fairness: A Runtime Perspective}
%
%
\author{Filip Cano \and
Thomas A. Henzinger \and
Konstantin Kueffner}
\authorrunning{F. Cano, T. A. Henzinger, and K. Kueffner}
%
\institute{Institute of Science and Technology Austria \\
\email{\{filip.cano, tah, konstantin.kueffner\}@ist.ac.at}}

\maketitle              
\begin{abstract}
Fairness in AI is traditionally studied as a static property evaluated once, over a fixed dataset. However, real-world AI systems operate sequentially, with outcomes and environments evolving over time. This paper proposes a framework for analysing fairness as a runtime property. Using a minimal yet expressive model based on sequences of coin tosses with possibly evolving biases, we study the problems of monitoring and enforcing fairness expressed in either toss outcomes or coin biases. Since there is no one-size-fits-all solution for either problem, we provide a summary of monitoring and enforcement strategies, parametrised by environment dynamics, prediction horizon, and confidence thresholds. For both problems, we present general results under simple or minimal assumptions. We survey existing solutions for the monitoring problem for Markovian and additive dynamics, and existing solutions for the enforcement problem in static settings with known dynamics.

\keywords{Fairness  \and Runtime monitoring \and Runtime enforcement}
\end{abstract}

\section{Introduction}





Fairness is essential for the responsible deployment of AI in numerous domains such as hiring, credit scoring, and criminal justice~\cite{dressel2018accuracy,obermeyer2019dissecting,scheuerman2019computers,liu2018delayed,berk2021fairness}. 
When AI influences decisions affecting human lives, equitable treatment is crucial to prevent the reinforcement of societal biases. 
%
%
Traditionally, fairness has been considered a \emph{static} property of AI systems, meaning that fairness is evaluated under the assumption of having access to a fixed snapshot of the world~\cite{dwork12,hardt2016equality}.
This approach assumes that the environment, user behaviour, and data distribution are observable and remain stable over time. 
These assumptions often fail in dynamic, real-world applications where AI systems interact with people and adapt to new inputs.


Recognising fairness as a \emph{sequential} property introduces a more nuanced and realistic understanding of the interaction of AI systems with the world.
In runtime settings, input-output pairs can only be accessed sequentially,
so fairness analysis is restricted to the evidence obtained by single traces.
Furthermore, the distribution of future inputs may change, either from exogenous processes or affected by the system's own past decisions.

In this paper, we focus on sequences of biased coin tosses, where each coin may have a different, potentially evolving bias. 
This minimal yet expressive setting allows us to explore fundamental questions about group fairness at runtime, abstracting away the complexities of real-world domains. 
The aim of this paper is to serve as a roadmap for the research of fairness from a runtime perspective, unifying existing results into a coherent framework and motivating future research on the topic.

In the coin flip setting, fairness can refer to the average outcome of the coin tosses, to the average bias, or to the bias of the most recent coin.
We show how these measures can be extended from finite to infinite sequences and how they can be interpreted over stochastic processes via conditional expectations, giving rise to a notion of fairness at runtime.

We then study two core problems: (1) the monitoring problem, in which the system passively observes outcomes and must estimate fairness within a confidence interval; and (2) the enforcement problem, in which the system can actively intervene to ensure fairness remains within a target range. For both problems, we survey recent work that approaches them under certain assumptions and also provide some novel results.

In this paper we provide an overview of the different concepts of fairness at runtime, the different variants of both the monitoring and the enforcement problem, and the existing approaches to both problems under concrete assumptions.
We parametrise both problems by the environment dynamics, the fairness property under study, the prediction horizon, i.e., how many steps in the future we are interested in, and the confidence value, which indicates the probability of obtaining a correct solution.
For the enforcement problem, we add a set of target intervals where the fairness property should lie, as an extra parameter.
In both the monitoring and enforcement problems, certain assumptions about the problem's parameters lead to tailored efficient solutions, showing that there is no one-size-fits-all solution.

We connect our theoretical framework with several recent results in runtime fairness.
For the monitoring problem, we show that under no assumptions on the environment dynamics, finite outcome fairness can be easily monitored, but infinite outcome cannot.
We then show how to monitor any fairness property with a finite horizon under the assumption that there is a single (unknown) coin. 
We outline the work of Henzinger et al.~\cite{henzinger2023monitoring,henzinger2023partial} on monitoring systems whose dynamics are governed by a Markov chain, as well as their work on monitoring systems that follow additive environmental dynamics~\cite{henzinger2023dynamic}.

For the enforcement problem, we show that any fairness property can be enforced with high confidence, disregarding cost, under mild assumptions on the target intervals.
We then study the enforcement of outcome fairness in depth in the simplest setting: a single coin of known bias. We survey work from Cano et al.~\cite{fairnessshieldsAAAI2025} on synthesising enforcers with different guarantees on cost-optimality and confidence value.
We also propose a new enforcer with less restrictive assumptions than those from Cano et al. and outline an extension of their work to systems where the bias of the coin changes according to known dynamics.

Taken together, the surveyed results and the unifying framework we introduce provide a comprehensive foundation for reasoning about fairness in sequential decision-making systems at runtime.

\section{Fairness Definitions}
\subsection{Preliminaries}
We denote the set of all natural numbers as $\NN$, the set of all positive natural numbers as $\pNN$, and the set of all real numbers as $\RR$. 
Let $\Adomain$ be any set. We denote the set of all length $n\in \NN$ sequences over $\Adomain^n$. Moreover, we denote the set of all sequences up to length $n$ as $\Adomain^{\leq n}$, the set of all finite sequences as $\Adomain^*$, and the set of all infinite sequences as $\Adomain^{\omega}$.
Let $m,n\in \NN\cup\{\infty\}$ such that $m\leq n$ and let $\Uvar\in \Adomain^*\cup\Adomain^{\omega}$ be a sequence of length $n$, we denote the prefix of length $m$ as $\Uvar_{1:m}= \Avar_1, \dots, \Avar_m$, where for $m=\infty$ we have $\Uvar_{1:m}=\Uvar$.


\subsection{Fairness over Sequences}
We limit our investigation of quantitative fairness measures to a simple coin flip setting.
We consider fairness on finite and infinite sequences.
We define two fairness measures over finite sequences. 
One measure quantifies fairness based on the coin tosses, the other does so over their biases. We generalise each of them by taking the limit.

\paragraph{Setting.}
We investigate fairness in a sequence of coins and coin toss outcomes, denoted by $\Wvar = (\Pvar_t, \Xvar_t)_{t \in \NN}$.
At each step $t\in \pNN$, the value $\Pvar_t\in [0,1]$ represents the bias of the coin and the value $\Xvar_t\in \{\tail, \head\}$ represents the outcome of the coin toss, where $\tail$ indicates tail and $\head$ indicates head. We denote the set of bias-outcome pairs as $\Wdomain=\Pdomain\times \Xdomain$ where $\Pdomain\subseteq [0,1]$ is a subset set of all coins and $\Xdomain=\{\tail, \head\}$. 
Note that we will use the terms ``bias of a coin'' and ``coin'' interchangeably. 

\paragraph{Outcome fairness.}
In the outcome measure, we evaluate fairness only on the outcomes of the coin tosses. The measure compares the average number of heads and tails. It ranges from $0$ (tails-biased) to $1$ (heads-biased), with $1/2$ indicating fairness.
Formally, for a sequence 
$\Wvar\in \Wdomain^{\omega}$ 
and $t\in \pNN$ the \emph{outcome fairness} of the prefix $\Wvar_{1:t}=\Wvar_1, \dots, \Wvar_t$ is defined as 
\begin{equation}
    \label{eq:outcome}
   \fair_O(\Wvar_{1:t}) = \frac{1}{t}  \sum_{i=1}^t \Xvar_i.
\end{equation}
\paragraph{Bias fairness.}
In the bias measure, we evaluate fairness on the average biases of the coins in the sequence, instead of the toss outcomes.
As before, a value close to $1/2$ indicates a fair sequence.
Formally, for a sequence 
$\Wvar\in \Wdomain^{\omega}$ 
and $t\in \pNN$ the \emph{bias fairness} of the prefix $\Wvar_{1:t}=\Wvar_1, \dots, \Wvar_t$ is defined as 
\begin{equation}
    \label{eq:average}
   \fair_B(\Wvar_{1:t}) = \frac{1}{t}  \sum_{i=1}^t \Pvar_i.
\end{equation}
\paragraph{Current fairness.}
In the current measure, we evaluate fairness on the bias of the current coin in the sequence.
As before, a value close to $1/2$ indicates a fair coin.
Formally, for a sequence 
$\Wvar\in \Wdomain^{\omega}$ 
and $t\in \pNN$ the \emph{current fairness} of the prefix $\Wvar_{1:t}=\Wvar_1, \dots, \Wvar_t$ is defined as 
\begin{align}
   \fair_C(\Wvar_{1:t}) =  \Pvar_t.
\end{align}
\paragraph{Infinite outcome and bias.}
We lift the fairness properties to infinite traces by taking the limit, i.e., for $\fair\in \{\fair_O, \fair_B, \fair_C\}$, whenever it exists,
\begin{align}
    \fair(\Wvar)= \lim_{t\to \infty} \fair(\Wvar_{1:t}).
\end{align}

\subsection{Fairness over Stochastic Processes}
In this section, we enrich our setting by considering the sequence of coins as a stochastic process.
With this additional structure, we can study the average and outcome fairness not only on concrete sequences, but also in expectation with respect to the dynamic evolution of the stochastic process, 
and conditioning on a prefix of the process.

\paragraph{Setting.}
We assume the sequence of bias-outcome pairs $\Wvar=(\Wvar_t)_{t\in \pNN}\in \Wdomain^{\omega}$ is a realization of a stochastic process $\Wrandomvar=(\Prandomvar_t, \Xrandomvar_t)_{t\in \pNN}$ with support $\Wdomain^{\omega}$.
The stochastic process is characterised by a dynamics function $\env\colon \Wdomain^* \to \distr(\Pdomain)$ mapping the past sequence into a probability distribution over the following coin. The process is defined by the following dynamics.
At each time step $t\in \pNN$ and the history $\Wvar_1, \dots, \Wvar_{t-1} \in \Wdomain^{t-1}$ we define
\begin{align}
    \Prandomvar_t\sim \env(\Wvar_1, \dots, \Wvar_{t-1} ) \quad \text{and} \quad \Xrandomvar_t \sim \mathrm{Bernoulli}(\Pvar_t)
\end{align}
where $\Pvar_t$ is a realization of $\Prandomvar_t$.
Intuitively, the next coin's bias $p_t$ is sampled from the process dynamics $\env$ according to the history $\Wvar_{1:t-1}$, 
and the toss outcome is then drawn according to this bias.

\paragraph{Runtime fairness.}
Runtime fairness captures how the expected value of a fairness property evolves over time as the stochastic process unfolds. This requires the use of the conditional expectation.
Formally, given a fairness property $\fair\in \{\fair_O, \fair_B, \fair_C\}$, the stochastic process $\Wrandomvar$, a prediction horizon $\hori\in \NN \cup \{\infty\}$, and the realized prefix $\Wvar_{1:t}=\Wvar_1, \dots, \Wvar_t\in \Wdomain^t$ at the current point in time $t\in \pNN$, we evaluate the \emph{runtime fairness} of the stochastic process for the measure $\fair$ and prediction horizon $\hori$ as
\begin{align}
    \run_{t}^\hori(\Wrandomvar;\fair) = \expe_{\env}(\fair(\Wrandomvar_{1:t+\hori}) \mid \Wvar_{1:t}),
\end{align}


\paragraph{Static vs. dynamic.}
    Defining fairness at runtime using the conditional expectation provides more insight into the actual fairness of the process, as compared to computing the expectation of the fairness measure directly. This is because with the conditional expectation, the property value adapts to the information accumulated as the process evolves.

\section{Fairness at Runtime}

\subsection{The Monitoring Problem}

\paragraph{Monitor.}
A monitor for a fairness measure $\fair$ is a function that reads a sequence of outcomes and estimates the value of the fairness property by providing an interval where the fairness property lies with high confidence.
Formally,
a monitor is a function $\Mon\colon  \Xdomain^* \to \Intervals([0, 1])$ maps a finite sequence of outcomes into an interval over $[0,1]$. 
%


\paragraph{Problem instance.}
A problem instance consists of a set of dynamics functions $\Env$,
a fairness property $\fair\in \{\fair_O, \fair_B, \fair_C\}$, a prediction horizon $\hori\in \NN$, and
an error probability threshold $\conf\in [0,1]$. 
The set $\Env$ encodes the assumption imposed onto the dynamics function used to create the stochastic process $\Wrandomvar$. We want a monitor that provides a correct verdict irrespective of the actual dynamics function $\env$, as long as it is in $\Env$.

\paragraph{Soundness.}
Let $\Mon\colon  \Xdomain^* \to \Intervals([0, 1])$ be a monitor. 
Given a problem instance $(\Env, \fair, \hori ,\conf)$ we call the monitor
\emph{pointwise sound}, if 
\begin{align}
    \forall \env \in \Env \colon \forall t\in \pNN \colon \prob_{\env}\big( \run_{t}^\hori( \Wrandomvar ;\fair) \in \Mon(\Urandomvar_{1:t}) \big)\geq 1-\conf,
\end{align}
and \emph{uniformly sound}, if 
\begin{align}
    \forall \env \in \Env \colon \prob_{\env}\big(  \forall t\in \pNN \colon \run_{t}^\hori( \Wrandomvar ;\fair) \in \Mon(\Urandomvar_{1:t}) \big)\geq 1-\conf.
\end{align}
Intuitively, we want the monitor to output the tightest interval possible while remaining sound.
\begin{problem}
    \label{prob:monitoring}
    Given a problem instance $(\Env, \fair, \hori,\conf)$, synthesise a pointwise sound or uniformly sound monitor. 
\end{problem}

\subsection{The Enforcement Problem}
\paragraph{Enforcer.}
An enforcer (often also called \emph{shield}) for a fairness property monitors the evolution of the fairness property at runtime and can overwrite some of the coin biases or outcomes with the goal of keeping the property inside a target interval. 
Formally, an enforcer is a function $\Enf:\Wdomain^* \to \distr(\Wdomain)$.
Intuitively, we want an enforcer that intervenes as little as possible.

\paragraph{Enforced process.}
The presence of an enforcer modifies the distribution of bias-outcome pairs as follows.
For each time step $t\in \pNN$ and bias-outcome sequence $\Wvar_1, \dots, \Wvar_{t-1} \in \Wdomain^{t-1}$ we define $ \Prandomvar_t'\sim \env(\Wvar_1, \dots, \Wvar_{t-1} ) $ and $ \Xrandomvar_t' \sim \mathrm{Bernoulli}(\Pvar_t')$. 
The actual bias-observation pair $(\Pvar_t,\Xvar_t)$ is a realization of the distribution defined by the enforcer w.r.t.\ the history and the realized bias-observation pair of the process $(\Pvar_t', \Xvar_t')$, i.e., 
\begin{align}
    (\Prandomvar_t, \Xrandomvar_t) \sim \Enf(\Wvar_1, \dots, \Wvar_{t-1}, (\Pvar_t', \Xvar_t')).
\end{align}
The resulting stochastic process is now characterised by the dynamics function $\env$ and the enforcer $\Enf$.

\paragraph{Problem instance.}
A problem instance consists of a set of dynamics functions $\Env$,
a fairness property $\fair\in \{\fair_O, \fair_B, \fair_C\}$, 
a prediction horizon $\hori\in \NN$, and an error probability threshold $\conf\in (0,1)$, 
and a sequence of intervals $\interval =(\interval_t)_{t\in\NN} \subseteq \Intervals([0,1])$, where the fairness property should lie in at each step.

\paragraph{Soundness.}
Let $\Enf\colon  \Wdomain^* \to \distr(\Wdomain)$ be an enforcer. 
Given a problem instance $(\Env, \fair, \hori,\conf, \interval)$ we call the enforcer
\emph{pointwise sound}, if 
\begin{align}
    \forall \env \in \Env \colon \forall t\in \pNN \colon \prob_{(\env, \Enf)}\big( \run_{t}^\hori( \Wrandomvar ;\fair) \in I_{t}) \big)\geq 1-\conf,
\end{align}
and \emph{uniformly sound}, if 
\begin{align}
    \forall \env \in \Env \colon \prob_{(\env, \Enf)}\big(  \forall t\in \pNN \colon \run_{t}^\hori( \Wrandomvar ;\fair) \in I_t \big)\geq 1-\conf.
\end{align}
Intuitively, we want an enforcer that intervenes as little as possible.

\begin{problem}
\label{prob:enforcement}
    Given a problem instance $(\Env, \fair, \hori,\conf, \interval)$, synthesise a pointwise sound or uniformly sound enforcer. 
\end{problem}


\subsection{Soundness and Quality of Monitors and Enforcers}
\paragraph{Pointwise vs. uniform soundness.}
We distinguish between pointwise and uniform soundness.
A pointwise sound monitor or enforcer has a small error probability at every point in time. 
This implies that it almost surely will make a mistake, i.e., the computed or target interval will fail to capture the actual fairness measure at least once on an infinite run.
By contrast, a uniformly sound monitor or enforcer guarantees that the invariant: ``the actual fairness measure is contained within the computed or target interval at all times'', holds with high probability. 
Note that for both monitors and enforcers, uniform soundness implies pointwise soundness, but the converse does not hold.

\paragraph{The quality of a monitor.}
We have vaguely stated that the tighter the computed intervals, the better.
In practice, we can compare one sound monitor with another for a given bias-outcome sequence by comparing the lengths of the computed intervals.
However, in order to compare two monitors on a problem instance, we would need to define how to aggregate these comparisons on finite sequences to the whole problem. This is out of the scope of this paper.

\paragraph{The quality of an enforcer.}
In analogy with monitors, we have informally argued that effective enforcers should minimise interventions. 
We chose to leave the notion of optimality deliberately vague, as little research has been done in formalizing it.
Moreover, not all interventions carry the same weight. Intuitively, altering an outcome that strongly aligns with the coin’s inherent bias constitutes a more significant intervention than one made under near-fair conditions. Likewise, modifying the bias slightly is less intrusive than drastically shifting it toward the opposite extreme.
In some of the cases we study, we formalise this intuition with an associated cost function $\cost\colon \Wdomain^*\times \Wdomain\to \RR_{\geq 0}$.
The cost function assigns a history-dependent cost to changing the current bias-outcome pair, and should satisfy that for all $\Wvar\in\Wdomain^*$ and all $(\Pvar,\Xvar)\in\Wdomain$, the cost of non-intervention is null, i.e., 
$
\cost((w,(\Pvar,\Xvar)),(\Pvar,\Xvar)) = 0.
$

\paragraph{Enforcement of bias or outcome.}
With our definition of enforcer as a function $\Enf\colon \Wdomain^* \to \distr(\Wdomain)$, we allow enforcement over both bias and outcome. 
However, enforcing both bias and outcome at the same time may not make sense.
We can think of enforcement as changing the bias of the coin before it is tossed or as changing the outcome of the coin toss.
In the first case, we are overwriting the bias $p_t$ of the coin by another value $p_t'$, so it only makes sense that the outcome follows a Bernoulli of parameter $p_t'$.
In the second case, we are tossing the coin with bias $p_t$, and overwriting the outcome, so this outcome is no longer correlated with the bias of any coin.
In this paper, we only consider enforcers that overwrite bias our outcome, but not both.


\subsection{Warm-up Examples}

\subsubsection{Outcome Fairness Monitor.}
\label{subsubsec:finite outcome}
We construct a pointwise sound or uniformly sound monitor for the set of all dynamics functions $\Env$ and the outcome fairness measure with prediction horizon of $0$.

\begin{assumption}[Outcome fairness with null horizon]
    \label{ass:warm1}
    We assume the problem to be $(\Env, \fair_O, 0,\conf)$, where $\Env$ is the set of all dynamics functions.
\end{assumption}

\paragraph{Monitor construction.}
We construct a monitor solving Problem~\ref{prob:monitoring} under Assumption~\ref{ass:warm1} using a single register $\regE$. 
This monitor incrementally updates the register to store the current value of $\fair_O$. The register is initialised with the value $0$. At time $t\in \pNN$ the value of this register is $\regE_{t-1}$. The monitor observes a new outcome $\Xvar_t$ and updates the register value as follows:
\begin{align*}
    \regE_t \gets (\Xvar + (t-1)\cdot \regE_{t-1})/ t.
\end{align*}
Because the outcomes are observed, the register value at time $t$ is guaranteed to be equal to the outcome fairness measure. 
This constitutes a general solution for the problem of monitoring outcome fairness with a null prediction horizon, so we will not discuss it further.

\subsubsection{Limit Outcome Fairness Monitor.}
It is impossible to construct a pointwise sound or uniformly sound monitor for the set of all dynamics functions $\Env$ and the outcome fairness measure with infinite prediction horizon.

\begin{assumption}[Outcome fairness with infinite horizon]
    \label{ass:warm2}
    We assume the problem to be $(\Env, \fair_O, \infty,\conf)$, where $\Env$ is the set of all dynamics functions.
\end{assumption}

\paragraph{Counterexample.}
Because $\Env$ contains all possible dynamics functions, it also contains the following set of dynamics functions 
$\Env_A=\{\env_k^A \mid k\in \NN \cup \{\infty\}\}$, defined as follows. 
For each $k\in \NN  \cup \{\infty\}$, the dynamics function $\env^A$ chooses the coin with bias $0$ for all $t\leq k$, otherwise it chooses the coin with bias $1$. 
In any concrete realisation $\Wvar\in \Wdomain^{\omega}$ generated by $\env_{\infty}^A$, it is impossible for the monitor to know whether the limit of the sequence is $0$ or $1$.

\subsubsection{Process-agnostic Fairness Enforcement.}
\label{sec:process-agnostic-enforcement}

Depending on the fairness property and the target intervals, the enforcement problem may be unfeasible. 
Current fairness can always be enforced by choosing at each time $t$ a bias $p_t$ inside the target interval.
However, outcome and bias fairness are built as averages over the sequence, so they cannot shift too fast.
\begin{lemma}
\label{lem:difference_fairness_step}
    Let $w = (p_t,x_t)_{t\in\NN}$ be a sequence of coins and outcomes, 
    and let $\fair\in\{\fair_O, \fair_B\}$.
    For all $t\geq 2$ we have $|\fair(w_{1:t}) - \fair(w_{1:(t-1)})| \leq 1/t$.
\end{lemma}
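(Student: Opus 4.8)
The plan is to treat both cases $\fair_O$ and $\fair_B$ uniformly by observing that each is an average of a sequence of values in $[0,1]$: for $\fair_O$ the summands are $\Xvar_i \in \{0,1\}$, and for $\fair_B$ they are $\Pvar_i \in [0,1]$. So it suffices to prove the following elementary statement: if $(v_i)_{i\in\pNN}$ is any sequence with $v_i \in [0,1]$, and $S_t = \frac{1}{t}\sum_{i=1}^t v_i$, then $|S_t - S_{t-1}| \leq 1/t$ for all $t \geq 2$. Applying this with $v_i = \Xvar_i$ gives the bound for $\fair_O$, and with $v_i = \Pvar_i$ gives it for $\fair_B$.

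First I would write $S_t$ in terms of $S_{t-1}$ and the new value $v_t$. From the definitions, $t\, S_t = \sum_{i=1}^t v_i = (t-1) S_{t-1} + v_t$, so
\begin{align*}
    S_t - S_{t-1} = \frac{(t-1)S_{t-1} + v_t}{t} - S_{t-1} = \frac{v_t - S_{t-1}}{t}.
\end{align*}
Then I would bound the numerator: since $v_t \in [0,1]$ and $S_{t-1}$ is an average of values in $[0,1]$, we also have $S_{t-1} \in [0,1]$, hence $v_t - S_{t-1} \in [-1,1]$, i.e. $|v_t - S_{t-1}| \leq 1$. Combining, $|S_t - S_{t-1}| = |v_t - S_{t-1}|/t \leq 1/t$, which is exactly the claim.

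The argument has essentially no obstacle — the only point requiring a word of care is noting that $S_{t-1} \in [0,1]$ (so that the difference $v_t - S_{t-1}$ really is bounded by $1$ in absolute value), which follows immediately because a convex combination, or here a uniform average, of numbers in $[0,1]$ stays in $[0,1]$. I would state this once at the start. The restriction $t \geq 2$ is needed only so that $\fair(w_{1:(t-1)})$ is defined (i.e. $t-1 \geq 1$); the inequality itself holds for every such $t$. No appeal to earlier results in the paper is needed beyond the definitions of $\fair_O$ and $\fair_B$ in equations~\eqref{eq:outcome} and~\eqref{eq:average}.
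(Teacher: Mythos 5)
Your proof is correct, and it is essentially the same elementary computation as the paper's: both express the difference of consecutive averages in closed form and bound the numerator. The one genuine difference is in organization. The paper sets $\xi=\sum_{i=1}^{t-1}x_i$, computes $\fair(w_{1:t})-\fair(w_{1:(t-1)})=\bigl(x_t t-(x_t+\xi)\bigr)/\bigl(t(t-1)\bigr)$, and then splits into the cases $x_t=0$ and $x_t=1$, using $\xi\le t-1$ in each; it then asserts that the $\fair_B$ case ``follows the same argument.'' Your identity $S_t-S_{t-1}=(v_t-S_{t-1})/t$ together with the observation $v_t, S_{t-1}\in[0,1]$ avoids the case split entirely and applies verbatim to summands in $[0,1]$, so it covers $\fair_B$ without the (small but real) generalization that the paper's binary case analysis leaves implicit. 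In that sense your write-up is slightly cleaner and more self-contained for the bias-fairness case; there is no gap.
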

A reasonable restriction is to assume that the intersection of the target intervals is non-empty, i.e., there is a value of the fairness property that is considered ``fair'' at all times.
\begin{assumption}[Enforceable bias fairness]
\label{assumption:interval_intersection}
We assume the enforcement problem to be $(\Env, \fair_B, \hori, \conf, \interval)$, where $\cap_{t\in\NN}I_t \neq \emptyset$.
\end{assumption}
This is enough to enforce bias fairness, as we can find 
$p_\cap \in \cap_{t} I_t$ and build the enforcer
$\Enf\colon \Wdomain^*\to\Wdomain$ for all $w\in\Wdomain^*$ as $\Enf(w) = (p_\cap,0)$.
\begin{theorem}
\label{thm:blind_bias_enforcement}
    The previous enforcer solves Problem~\ref{prob:enforcement} under Assumption~\ref{assumption:interval_intersection}.
\end{theorem}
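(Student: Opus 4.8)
The plan is to show that the proposed enforcer $\Enf(w) = (p_\cap, 0)$, which overwrites every coin bias with the fixed value $p_\cap \in \bigcap_{t} I_t$, makes the enforced process deterministic in its bias sequence, and then to directly compute the runtime fairness $\run_t^\hori(\Wrandomvar; \fair_B)$ under this enforced process. Since the bias at every step is forced to equal $p_\cap$ regardless of history, under the enforced dynamics we have $\Prandomvar_s = p_\cap$ almost surely for all $s \in \pNN$. Note that because $\fair_B$ depends only on the biases and not on the outcomes, the choice of outcome component $0$ in the enforcer is irrelevant for this property.

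The key computation is then the following: for any realized prefix $\Wvar_{1:t}$ consistent with the enforced process (so $\Pvar_s = p_\cap$ for $s \le t$), and any horizon $\hori \in \NN$,
\begin{align}
    \run_t^\hori(\Wrandomvar; \fair_B) = \expe_{(\env,\Enf)}\!\left( \frac{1}{t+\hori} \sum_{s=1}^{t+\hori} \Prandomvar_s \;\middle|\; \Wvar_{1:t} \right) = \frac{1}{t+\hori}\sum_{s=1}^{t+\hori} p_\cap = p_\cap,
\end{align}
since each $\Prandomvar_s$ equals $p_\cap$ almost surely (the first $t$ by the conditioning, the remaining $\hori$ by the enforced dynamics). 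Hence $\run_t^\hori(\Wrandomvar; \fair_B) = p_\cap$ deterministically, so in particular $\run_t^\hori(\Wrandomvar;\fair_B) \in I_t$ with probability $1 \ge 1-\conf$ for every $t$, and moreover the event ``$\forall t\colon \run_t^\hori(\Wrandomvar;\fair_B) \in I_t$'' also has probability $1$. This establishes both pointwise and uniform soundness simultaneously, for any $\conf \in (0,1)$, so the enforcer solves Problem~\ref{prob:enforcement} under Assumption~\ref{assumption:interval_intersection}.

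I would present this in three short steps: (i) observe that Assumption~\ref{assumption:interval_intersection} guarantees $p_\cap$ exists and lies in $I_t$ for all $t$; (ii) argue that under $(\env, \Enf)$ the bias process is the constant sequence $p_\cap$ almost surely, regardless of $\env \in \Env$; (iii) plug this into the definition of $\run_t^\hori$ and conclude the conditional expectation equals $p_\cap$, giving the (trivially strong) probability-$1$ soundness guarantee.

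Honestly there is no serious obstacle here — the statement is essentially immediate once one unwinds the definitions, and the only mild subtlety worth spelling out is that $\fair_B$ ignores outcomes, so the constant bias $p_\cap$ fully determines the value of $\fair_B$ on every prefix and hence its conditional expectation; Lemma~\ref{lem:difference_fairness_step} is not actually needed for this particular enforcer (it motivates why the non-empty-intersection restriction is natural, rather than being used in the proof). If desired, one could also remark that the same argument shows the enforcer is uniformly sound for every horizon $\hori \in \NN \cup \{\infty\}$, since the constant sequence has limit $p_\cap$ as well.
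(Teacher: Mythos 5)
Your proposal is correct and follows essentially the same route as the paper's own (one-line) proof: the enforcer forces every bias to equal $p_\cap$, so $\fair_B$ of every enforced prefix — and hence the conditional expectation $\run_t^\hori(\Wrandomvar;\fair_B)$ for any horizon — is identically $p_\cap \in \bigcap_t I_t$, giving probability-$1$ pointwise and uniform soundness. Your additional remarks (that the outcome component is irrelevant for $\fair_B$ and that Lemma~\ref{lem:difference_fairness_step} is not needed here) are accurate and merely make explicit what the paper leaves implicit.
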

For outcome fairness, we need a stronger condition.
%
\begin{assumption}[Enforceable outcome fairness]
\label{assumption:interval_intersection_padded}
    We assume the enforcement problem to be $(\Env, \fair_O, \hori, \conf, \interval)$, where $\interval=(\interval_t)_{t\in\NN}$ is such that there exists $p\in [0,1]$ satisfying for all $t\in\NN$ the condition
    $[\max(0,p-1/t), \min(1, p+1/t)]\subseteq I_t$.
\end{assumption}
With this condition, we can build the following enforcer for outcome fairness:
\begin{equation}
\label{eq:bruteforce_outcome_enforcement}
    \Enf(w_{1:t}) = \begin{cases}
        1 & \mbox{ if } \fair_O(w_{1:t}) \leq p,\\
        0 & \mbox{ otherwise. }
    \end{cases}
\end{equation}
\begin{theorem}
    \label{thm:blind_outcome_enforcement}
    The enforcer in Eq.~\eqref{eq:bruteforce_outcome_enforcement} solves Problem~\ref{prob:enforcement} under Assumption~\ref{assumption:interval_intersection_padded}.
\end{theorem}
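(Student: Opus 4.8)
Since the enforcer of Eq.~\eqref{eq:bruteforce_outcome_enforcement} only ever rewrites \emph{outcomes}, and its decision at step $t$ depends solely on the outcomes produced so far, the coin dynamics $\env$ are irrelevant: it suffices to reason about the outcome sequence generated under $\Enf$, uniformly over $\env\in\Env$. The plan is to establish one deterministic invariant and then push it through the conditional expectation defining $\run_t^\hori$.

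\emph{Step 1 (the invariant).} I would show by induction on $t$ that along \emph{every} realization of the enforced process,
\[
\fair_O(w_{1:t}) \in \bigl[\max(0,p-1/t),\ \min(1,p+1/t)\bigr] \qquad \text{for all } t\ge 1 .
\]
The base case $t=1$ is immediate, as $\fair_O(w_{1:1})=x_1\in\{0,1\}\subseteq[0,1]$. For the inductive step I would use the averaging recursion $\fair_O(w_{1:t}) = \bigl((t-1)\,\fair_O(w_{1:t-1}) + x_t\bigr)/t$ and split on the enforcer's decision: it outputs a head ($x_t=1$) exactly when the running $\fair_O$-value it inspects is $\le p$, and a tail exactly when that value is $>p$. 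In the head case, substituting $\fair_O(w_{1:t-1})\le p$ into the recursion gives $\fair_O(w_{1:t})\le p+(1-p)/t\le p+1/t$, and substituting the induction hypothesis $\fair_O(w_{1:t-1})\ge p-1/(t-1)$ (which holds even when the truncation at $0$ is active, since $\max(0,y)\ge y$) gives $\fair_O(w_{1:t})\ge p-p/t\ge p-1/t$; together with the trivial $0<\fair_O(w_{1:t})\le 1$ this yields the claim. The tail case is symmetric, reading the tight lower bound off the enforcer's rule and the upper bound off the induction hypothesis.

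\emph{Step 2 (invariant to soundness).} By Assumption~\ref{assumption:interval_intersection_padded} the padded intervals satisfy $[\max(0,p-1/t),\min(1,p+1/t)]\subseteq I_t$, and they are nested: for every $s\ge t$ we have $[\max(0,p-1/s),\min(1,p+1/s)]\subseteq[\max(0,p-1/t),\min(1,p+1/t)]\subseteq I_t$. Applying Step 1 with $s=t+\hori$ shows that, surely and for every $\env\in\Env$, $\fair_O(\Wrandomvar_{1:t+\hori})\in I_t$. Since $\run_t^\hori(\Wrandomvar;\fair_O)=\expe_{(\env,\Enf)}\!\bigl(\fair_O(\Wrandomvar_{1:t+\hori})\mid\Wvar_{1:t}\bigr)$ is a conditional expectation of a random variable lying almost surely in the convex set $I_t$, it lies in $I_t$ as well, simultaneously for all $t\in\pNN$ with probability $1$. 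Hence the enforcer is uniformly sound (with zero error, a fortiori within the threshold $\conf$), and therefore also pointwise sound, for the instance $(\Env,\fair_O,\hori,\conf,\interval)$, which is exactly Problem~\ref{prob:enforcement} under Assumption~\ref{assumption:interval_intersection_padded}.

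\emph{Main obstacle.} The delicate point is the endpoint on the side the enforcer is actively pushing toward. It is tempting to combine ``the previous average is within $1/(t-1)$ of $p$'' with the per-step change bound $1/t$ of Lemma~\ref{lem:difference_fairness_step}, but this only gives a slack of $1/(t-1)+1/t$, which is too loose. The fix is to \emph{not} stack the two slacks: plug the relevant inequality directly into the averaging recursion, using the enforcer's threshold ($\le p$ resp.\ $>p$) for the pushed-toward endpoint and the induction hypothesis for the opposite one, so that the $1/(t-1)$ slack is damped to $p/t\le 1/t$ after a single averaging step. Checking the $\min/\max$ truncations at $0$ and $1$ in each subcase is the only remaining routine bookkeeping.
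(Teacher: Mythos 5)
Your overall route is the paper's: establish the deterministic invariant $\fair_O(\Wvar_{1:t})\in[\max(0,p-1/t),\min(1,p+1/t)]\subseteq I_t$ by induction on $t$, using the enforcer's threshold for the endpoint it pushes toward and the induction hypothesis (damped by one averaging step) for the opposite endpoint. Your Step~2 --- nesting of the padded intervals plus the fact that a conditional expectation of a variable lying surely in $I_t$ lies in $I_t$ --- is a welcome completion that the paper's proof leaves implicit; it is what actually covers arbitrary horizons $\hori$ and yields uniform soundness with zero error.

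One concrete step needs repair. The test in Eq.~\eqref{eq:bruteforce_outcome_enforcement} is on $\fair_O(\Wvar_{1:t})$, whose last entry is the \emph{proposed} outcome $\Xvar_t'$ (per the enforced-process semantics), not on the pre-toss average $a=\fair_O(\Wvar_{1:t-1})$ that your case split uses. Near the threshold the two disagree: if $a>p$ but the proposed toss is $0$ and $(t-1)a/t\le p$, the enforcer outputs a head, yet you would place this run in the tail case, and your head-case derivation ``substitute $a\le p$ to get $\fair_O(\Wvar_{1:t})\le p+(1-p)/t$'' starts from a false premise (indeed the value then exceeds $p+(1-p)/t$, though not $p+1/t$). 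The fix is to read the pushed-toward bound off the enforcer's actual condition rather than off the sign of $a-p$: when it outputs $1$, the condition gives $(t-1)a/t\le\fair_O(\Wvar_{1:t-1},\Xvar_t')\le p$, hence $\fair_O(\Wvar_{1:t})=((t-1)a+1)/t\le p+1/t$; when it outputs $0$, it gives $((t-1)a+1)/t\ge\fair_O(\Wvar_{1:t-1},\Xvar_t')>p$, hence $\fair_O(\Wvar_{1:t})=(t-1)a/t>p-1/t$. The opposite endpoints follow from the induction hypothesis exactly as you computed (your point about not stacking the $1/(t-1)$ and $1/t$ slacks is the right one and is also what the paper relies on). With that substitution the argument is complete.
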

While these enforcers always exist, they bear no consideration on minimising interventions, and Assumptions~\ref{assumption:interval_intersection} and~\ref{assumption:interval_intersection_padded} are so weak that reasoning about that is challenging.

\section{Fairness Monitoring with Unknown Dynamics}
There does not exist a monitor capable of solving Problem~\ref{prob:monitoring} in full generality. The difficulty of the problem arises from the structure of the set of dynamics functions and the choice of the fairness measure.

\subsection{Static Coins}
\label{subsubsec:static}
If the dynamics are restricted to the set of fixed coins, we can construct a pointwise sound or uniformly sound monitor for all fairness measures and all prediction horizons. We define $\env_{\Pvar}$ to be the constant dynamics function, mapping every history into the point measure on $\Pvar\in [0,1]$, i.e., into a coin with bias $\Pvar\in [0,1]$.
This problem was first studied in~\cite{albarghouthi2019fairness}.
We present their solution to pointwise monitoring and propose an improved solution for uniformly sound monitoring.

\begin{assumption}[Constant dynamics]
    \label{assumption:all_coins}
     We assume the problem to be $(\Env, \fair, \hori,\conf)$, where $\Env$ is the set of all constant dynamics function, $\fair\in \{\fair_O,\fair_B,\fair_C\}$, and $\hori\in \NN\cup\{\infty\}$.
\end{assumption}

\paragraph{Monitor construction.}
For this class of processes, all properties can be monitored. As in Section~\ref{subsubsec:finite outcome}, the monitor maintains the register $\regE$,
which provides a point estimate of the fairness value. An error bound is then added to this estimate to form the monitoring interval. 
The computation of the error depends on whether a pointwise or uniformly sound monitor is required; that is, the error is computed differently in each case. 
They are, respectively,
\begin{align*}
    \error_t^p &= \sqrt{\frac{\log\left(2/\delta\right)}{2t} } \quad\text{and}\quad 
    \error_t^u = \sqrt{\frac{1.1    \left(2 \log\left(\pi\log(t) / \sqrt{6}\right) + \log(2/\conf)\right)}{t}}.
\end{align*}
The output of the monitor at time $t$ is the interval $[\regE_t-\error_t, \regE_t+\error_t]$ for $\error_t\in \{\error_t^p,\error_t^u\}$.
The soundness of the intervals is a direct consequence of known concentration inequalities~\cite{hoeffding1963,howard2021time}.
It generalises to other properties because in this setting, all properties, except outcome fairness, coincide. 
Moreover, this monitor can trivially be extended to outcome fairness with horizon $\hori\in \NN$ by taking the current value of the register $\regE_t$, which equals outcome fairness with horizon $0$, and extrapolating the values of the interval to the given horizon, i.e., 
\begin{align*}
    [(t\cdot\regE_t + \hori \cdot (\regE_t-\error_t))/(t+\hori), (t\cdot\regE_t + \hori \cdot (\regE_t+\error_t))/(t+\hori)].
\end{align*}

\begin{theorem}
\label{thrm:static}
    The monitor described above solves Prob.~\ref {prob:monitoring} under Ass.~\ref{assumption:all_coins}.
\end{theorem}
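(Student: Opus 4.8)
The plan is to establish soundness of the proposed monitor by treating the three fairness properties separately according to how they collapse under constant dynamics, and then invoke the appropriate concentration inequality in each case. First I would observe that under Assumption~\ref{assumption:all_coins}, the dynamics function is $\env_{\Pvar}$ for some fixed but unknown $\Pvar \in [0,1]$, so the process $\Wrandomvar$ consists of a constant bias sequence $\Prandomvar_t \equiv \Pvar$ together with i.i.d.\ draws $\Xrandomvar_t \sim \mathrm{Bernoulli}(\Pvar)$. The register $\regE_t$ is exactly the empirical mean $\frac{1}{t}\sum_{i=1}^t \Xvar_i$, as established in Section~\ref{subsubsec:finite outcome}.

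\textbf{Bias and current fairness.} For $\fair_B$ and $\fair_C$, the target quantity is deterministic: $\fair_B(\Wrandomvar_{1:t+\hori}) = \Pvar$ and $\fair_C(\Wrandomvar_{1:t+\hori}) = \Pvar$ for every $t$ and every horizon $\hori$, and hence $\run_t^\hori(\Wrandomvar;\fair) = \Pvar$ after taking conditional expectations. So in both cases the monitor must cover the unknown mean $\Pvar$ of a Bernoulli sample. For the pointwise case, $\regE_t$ is an average of $t$ i.i.d.\ $[0,1]$-valued random variables, so Hoeffding's inequality~\cite{hoeffding1963} gives $\prob(|\regE_t - \Pvar| \geq \error_t^p) \leq 2\exp(-2t(\error_t^p)^2) = \conf$ for the stated choice of $\error_t^p$; this yields pointwise soundness. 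For the uniform case, $\error_t^u$ is (up to the constant $1.1$ and the iterated-logarithm correction) the time-uniform confidence sequence of Howard et al.~\cite{howard2021time}, which guarantees $\prob(\exists t : |\regE_t - \Pvar| \geq \error_t^u) \leq \conf$; this yields uniform soundness.

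\textbf{Outcome fairness.} For $\fair_O$ with horizon $\hori = 0$, $\run_t^0(\Wrandomvar;\fair_O) = \fair_O(\Wrandomvar_{1:t}) = \regE_t$ exactly, so the interval $[\regE_t - \error_t, \regE_t + \error_t]$ trivially contains it (for $\error_t \geq 0$), and soundness holds with probability $1$ --- though to match the claim that the monitor is the same object, note we may equally regard $\regE_t$ as the empirical mean estimating the shared parameter $\Pvar$ and the argument above applies verbatim since $\Pvar$ coincides with every property value at horizon $0$. For outcome fairness with horizon $\hori \in \NN$, I would compute $\run_t^\hori(\Wrandomvar;\fair_O) = \expe(\frac{1}{t+\hori}\sum_{i=1}^{t+\hori}\Xrandomvar_i \mid \Wvar_{1:t}) = \frac{1}{t+\hori}(t\cdot\regE_t + \hori\cdot\Pvar)$, since the future $\hori$ tosses are i.i.d.\ $\mathrm{Bernoulli}(\Pvar)$ independent of the prefix. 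Substituting the confidence bounds $\Pvar \in [\regE_t - \error_t, \regE_t + \error_t]$ into this affine expression gives exactly the extrapolated interval $[(t\regE_t + \hori(\regE_t - \error_t))/(t+\hori),\ (t\regE_t + \hori(\regE_t + \error_t))/(t+\hori)]$, so the event $\Pvar \in [\regE_t \pm \error_t]$ implies $\run_t^\hori \in$ (extrapolated interval), and soundness transfers with the same probability guarantees.

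\textbf{Main obstacle.} The only genuinely delicate point is verifying that $\error_t^u$ as written actually satisfies the hypotheses of the time-uniform bound in~\cite{howard2021time} --- in particular matching the iterated-logarithm form $2\log(\pi\log(t)/\sqrt{6})$ and the constant $1.1$ to the precise statement of their stitched boundary for sub-Gaussian (here, bounded hence $\frac{1}{4}$-sub-Gaussian) averages, and checking it holds for all $t \geq 1$ (or from some small $t_0$ onward, handled trivially by the interval being $[0,1]$ when the bound exceeds $1/2$). Everything else is a routine assembly of (i) the collapse of the property values under constant dynamics, (ii) a one-line conditional-expectation computation for the horizon-$\hori$ outcome case, and (iii) a direct application of Hoeffding and the confidence-sequence inequality; I would not belabour these.
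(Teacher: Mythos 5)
Your proof follows essentially the same route as the paper's: reduce every property under constant dynamics to covering the single parameter $\Pvar$, apply Hoeffding's inequality for pointwise and the Howard et al.\ confidence sequence for uniform soundness, and push the confidence interval through the affine extrapolation for finite-horizon outcome fairness. The only piece you omit is the $\hori=\infty$ case for $\fair_O$ (permitted by Assumption~\ref{assumption:all_coins}), which the paper dispatches via the strong law of large numbers to show $\run_t^{\infty}(\Wrandomvar;\fair_O)=\Pvar$ almost surely, after which your argument for covering $\Pvar$ applies unchanged.
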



\subsection{Observed Markovian Dynamics}
\label{subsec:cav}

In this section, we discuss the result presented in Henzinger et al.~\cite{henzinger2023monitoring} investigating fairness on observable Markov chains.
To match our setting to the assumption made in~\cite{henzinger2023monitoring}, 
we assume $\Pdomain=\{\Pvar^{(1)}, \dots, \Pvar^{(n)}\}$ to be finite and we restrict our attention to the set of all dynamics functions with type $\env\colon \Wdomain\to \distr(\Pdomain)$, such that the induced Markov chain over $\Wdomain$ is irreducible, i.e., every coin is visited infinitely often.

\begin{assumption}[Observed Markov chain]
    \label{assumption:mc}
    We assume the problem to be $(\Env, \fair, \hori,\conf)$, where $\fair\in \{\fair_O,\fair_B,\fair_C\}$ and $\hori\in \NN$.
    We assume $\Env$ is the set of all dynamics functions that induce a finite, irreducible Markov chain. Additionally, we assume that the monitor observes the labels of the coins, i.e., if the bias at time $t\in \NN$ is $\Pvar_t=\Pvar^{(k)}\in  \Pdomain$, the monitor observes $k\in[n]$.
\end{assumption}

\paragraph{Properties.}
The monitor is designed for a specification language consisting of arithmetic expressions over single-step transition probabilities in a Markov chain.
\begin{example}
    \label{ex:cav}
    Let $\Pdomain=\{\Pvar^{(1)},\Pvar^{(2)} \}$, the monitor can estimate the expected bias of the next coin, i.e., current fairness with horizon $1$ conditioned on a fixed current bias-outcome pair. 
    For every for every $\Wvar=(\Pvar, \Xvar)\in \Wdomain$ this property corresponds to the expression
    $
    \psi = \Pvar^{(1)} \env(\Pvar, \Xvar)(\Pvar^{(1)})+ \Pvar^{(2)} \env(\Pvar, \Xvar)(\Pvar^{(1)}).
    $
    To obtain the current fairness with horizon $1$ for the state observed at time $t$, we can deploy one monitor for each $\Wvar\in \Wdomain$ and select the appropriate verdict at every time step. 
\end{example}
As in the above example, we can construct the appropriate expressions for both the current and the biased fairness with finite prediction horizon.
Because the monitor is a general purpose monitor, designed for a much richer class of time invariant properties, it is not optimised for the specific and adapting fairness properties considered here. Hence, while sound, it is not an efficient approach. 

\paragraph{Monitor construction.}
For a given expression, e.g., the one in the example above, the monitor aggregates the observed sequence of coin labels and outcomes into a sequence of independent random variables with the same expected value as the expression. This conversion is done memory-less, utilising only a few counters. 
The monitor estimates the expected value of this sequence and constructs the pointwise and uniform error bounds, similar to the monitor for a static coin sketched in Section~\ref{subsubsec:static}.


\begin{theorem}[\cite{henzinger2023monitoring}]
    \label{thrm:mc}
    The monitor described above solves Prob.~\ref{prob:monitoring} under Ass.~\ref{assumption:mc}.
\end{theorem}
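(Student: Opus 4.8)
\textbf{Proof proposal for Theorem~\ref{thrm:mc}.}
The plan is to reduce the fairness-monitoring task under Assumption~\ref{assumption:mc} to the general-purpose Markov-chain monitoring result of Henzinger et al.~\cite{henzinger2023monitoring}, and then check that the reduction preserves both soundness notions. First I would make precise the claim sketched in Example~\ref{ex:cav} and the paragraph after it: for each fairness property $\fair \in \{\fair_C, \fair_B\}$ and each finite horizon $\hori \in \NN$, the quantity $\run_t^\hori(\Wrandomvar; \fair)$ conditioned on the observed state at time $t$ is a polynomial expression in the single-step transition probabilities $\env(\Pvar, \Xvar)(\Pvar^{(k)})$ of the induced Markov chain over $\Wdomain$. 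For $\fair_C$ this follows by expanding the $\hori$-step transition probabilities as a product of single-step ones and summing $\Pvar^{(k)}$ against the resulting distribution; for $\fair_B$ one additionally averages these conditional expectations over the $\hori$ future steps together with the already-observed biases in the prefix (the prefix contribution is a constant determined by the observed history, which the monitor can track with a counter). Outcome fairness $\fair_O$ with finite horizon is handled exactly as in the static-coin case: the contribution of the observed prefix is known, and the contribution of the $\hori$ future tosses is again a polynomial in transition probabilities, since $\expe(\Xrandomvar_{t+j} \mid \text{state at } t)$ is such an expression.

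Next I would invoke the monitor of~\cite{henzinger2023monitoring} for the specification language of arithmetic expressions over single-step transition probabilities, applied to the expression $\psi$ constructed above. The key structural fact, which I would quote rather than reprove, is that such a monitor converts the observed label-outcome stream into a sequence of i.i.d.\ (or, more precisely, a martingale-difference / suitably independent) bounded random variables whose common mean equals the value of $\psi$, using only finitely many counters; irreducibility of the chain guarantees every relevant transition is sampled infinitely often, so the empirical estimate is consistent. Around this estimate the monitor places a pointwise error term (Hoeffding-type, as in $\error_t^p$) or a uniform error term (time-uniform confidence sequence, as in $\error_t^u$), exactly mirroring the static construction in Section~\ref{subsubsec:static}. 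Since one such monitor is deployed per state $\Wvar \in \Wdomain$ (finitely many, as $\Pdomain$ is finite and $\Xdomain = \{\tail,\head\}$) and at each time $t$ the verdict of the monitor matching the currently observed state is reported, the output interval contains $\run_t^\hori(\Wrandomvar;\fair)$ with the prescribed confidence.

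Finally I would verify that the per-state decomposition does not degrade the confidence guarantee. For pointwise soundness this is immediate: at each fixed $t$ only one monitor produces the verdict, and that monitor is pointwise sound, so $\prob_\env(\run_t^\hori(\Wrandomvar;\fair) \in \Mon(\Urandomvar_{1:t})) \geq 1-\conf$. For uniform soundness one must be slightly more careful, since the reported verdict switches between the $|\Wdomain|$ sub-monitors over time; the clean way is to have each sub-monitor be uniformly sound and take a union bound over the finitely many states, or — better, matching~\cite{henzinger2023monitoring} — to observe that the monitor's time-uniform confidence sequence already accounts for all times simultaneously and the switching merely restricts attention to the subsequence of visits to a given state, on which the same confidence sequence remains valid. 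The main obstacle I anticipate is precisely this bookkeeping: stating the sub-sampling argument for the uniform bound carefully (the confidence sequence of~\cite{howard2021time} is valid along any stopping-time-adapted subsequence, so restricting to visit times of a fixed state is fine), and making fully explicit the polynomial expression for $\fair_B$ and $\fair_O$ with horizon $\hori$, including the constant prefix terms. Everything else is routine given Theorem~\ref{thrm:static} and the cited results.
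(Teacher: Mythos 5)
Your overall reduction is the same as the paper's: express the finite-horizon fairness value, conditioned on the currently observed bias-outcome pair, as an arithmetic expression over single-step transition probabilities of the induced Markov chain; deploy one monitor from Henzinger et al.~\cite{henzinger2023monitoring} per pair in $\Wdomain$; take a union bound over the finitely many sub-monitors; and report the verdict of the monitor matching the observed label. Your treatment of current fairness and of the pointwise/uniform bookkeeping matches the paper's proof in substance.

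There is, however, one genuine gap, in your handling of the realized prefix of bias fairness. You claim that for $\fair_B$ the prefix contribution ``is a constant determined by the observed history, which the monitor can track with a counter.'' Under Assumption~\ref{assumption:mc} the monitor observes only the \emph{label} $k$ of each coin, not the bias value $\Pvar^{(k)}$; the biases are unknown parameters of the chain. Hence the realized part $\frac{1}{t+\hori}\sum_{i=1}^{t}\Pvar_i$ is not observable and cannot be stored in a register. The paper closes this by additionally deploying, for each $k$, a monitor for the transition probability $M(\Pvar^{(k)},(\Pvar^{(k)},1))$, which equals $\Pvar^{(k)}$, yielding per-coin confidence bounds $[l_t^k,u_t^k]$; the realized prefix is then bracketed by $\sum_k C_t^k\, l_t^k$ and $\sum_k C_t^k\, u_t^k$, where $C_t^k$ counts visits to coin $k$, and these bounds are added to the interval for the expected future part, with the union bound enlarged to cover these extra monitors. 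Without this step your interval for $\fair_B$ is not computable from the observations. Your treatment of $\fair_O$ is unaffected, since outcomes are observed and the prefix sum of outcomes genuinely is a counter; the rest of your argument is a faithful, if less explicit, version of the paper's.
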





\subsection{Hidden Markovian Dynamics}
\label{subsec:hmm}
In this section we discuss the result presented in Henzinger et al.~\cite{henzinger2023partial}. In their paper, they investigate the fairness of hidden Markov chains.
In Section~\ref{subsec:cav} we assume that the monitor observes, in addition to the outcome, the label of the current coin. 
This assumption makes the setting fully observable. We now drop this assumption, i.e., the induced Markov chain is partially-observed.
To compensate, we assume that the induced Markov chain is irreducible and aperiodic with its stationary distribution as its initial distribution. Moreover, we assume knowledge of a bound on its mixing time, i.e., the time required to converge to the stationary distribution~\cite{henzinger2023partial}.

\begin{assumption}[Hidden Markov chain]
    \label{assumption:hmm}
     We assume the problem to be $(\Env, \fair, \hori,\conf)$, where $\fair\in \{\fair_O,\fair_B,\fair_C\}$ and $\hori=\infty$.
    We assume $\Env$ is the set of all dynamics functions that induce an irreducible, aperiodic, finite Markov chain with a mixing time bounded by $\tau_{mix}$, and starting in its stationary distribution $\eta\in \distr(\Wdomain)$.
\end{assumption}

\paragraph{Properties.}
The monitor is designed for a specification language consisting of arithmetic expressions over expected values $\expe_{\eta}(f(\Xvar_1, \dots, \Xvar_n))$ of a given bounded function with arity $n\in \NN$ evaluated over outcomes, i.e., $f\colon \Xdomain^n\to [a,b]$. The expectation is taken w.r.t. the stationary distribution $\eta\in \distr(\Wdomain)$ of a partially observed Markov chain. Intuitively, the stationary distribution equals the proportion of time spent in each state in $\Wdomain$. We give a small example below.
\begin{example}
    \label{ex:rv}
    Let $\Pdomain=\{\Pvar^{(A)},\Pvar^{(B)} \}$, the monitor can estimate the limit average bias of the process $\Wrandomvar$, i.e., bias fairness with horizon $\infty$, by estimating
    \begin{align*}
    \psi = \expe_{\eta}(\indi[\Xrandomvar_1=1])
    \end{align*}
   The above expressions equals the proportion of heads observed over an infinite run, i.e., the limit of bias fairness.
\end{example}
For stationary Markov chains the properties $\fair_O$, $\fair_B$, and $\fair_C$ have the same value as $\psi$, if the prediction horizon is $\infty$. Unfortunately, this monitor is not suited for finite horizon properties.

\paragraph{Monitor construction.}
The monitor maintains a single register $\regE$ to incrementally estimate the value of the given function $f\colon \Xdomain^n\to [a,b]$ at runtime. 
To evaluate the function, the monitor must maintain a fixed memory of $n$. 
The register is initialised with $0$. 
After the monitor observes the first $n$ elements, it updates the register at every time $t$ as follows, 
\begin{align*}
    \regE_{t} \gets (\regE_{t-1}\cdot (t-n) + f(\Xvar_t-n, \dots, \Xvar_t))/ (t-n+1).
\end{align*}
As in Section~\ref{subsubsec:static}, it constructs an interval around $\regE$ using error bounds that depend on the mixing time.
The error bound is 
\begin{equation*}
   \sqrt{\frac{9t n^2 (b-a)^2 \tau_{mix}}{2 (t-(n-1))^2}\cdot K},
\end{equation*}
where $K = \log(2/\delta)$ for the pointwise sound monitor, 
and $K=\log(\pi^2t^2/3\delta)$ for the uniformly sound monitor.
%
The bound has to account for the mixing time of the Markov chain to avoid premature verdicts, as in the example below.
\begin{example}[Ex.~\ref{ex:rv} cont.]
    \label{ex:rv2}
    Let $\Pvar^{(A)} = 0.9$ and $\Pvar^{(B)} = 0.1$. Assume the dynamics function is defined for every $\Xvar\in \Xdomain$ and $g\in \{A, B\}$ such that $\Prandomvar \sim \env(\Pvar^{(g)}, \Xvar)$  is $\Pvar^{(g)}$ with a probability $1-\varepsilon$ for a small $\varepsilon>0$. Although we remain a long time with one coin, we can expect an equal number of heads and tails in the limit. However, if we were to deploy the monitor described in Section~\ref{subsubsec:static}, we would unduly, yet confidently, declare the limit to be biased.
\end{example}

\begin{theorem}[\cite{henzinger2023partial}]
    \label{thrm:hmm}
    The monitor described above solves Prob.~\ref{prob:monitoring} under Ass.~\ref{assumption:hmm}.
\end{theorem}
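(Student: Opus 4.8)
The goal is to show that the register-based monitor with the stated error bound is pointwise sound (resp.\ uniformly sound) for the problem instance $(\Env,\fair,\infty,\conf)$ under Assumption~\ref{assumption:hmm}. My plan is to reduce the claim, in three stages, to a concentration inequality for empirical averages of a bounded function along a trajectory of an ergodic Markov chain started in its stationary distribution.

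\emph{Stage 1: reduce the fairness property to the target expectation $\psi$.} Since $\hori=\infty$ and the chain is irreducible, aperiodic and finite, the ergodic theorem for Markov chains gives that the limits defining $\fair_O(\Wrandomvar)$, $\fair_B(\Wrandomvar)$, $\fair_C(\Wrandomvar)$ exist almost surely and equal the corresponding stationary expectations; moreover, because the chain starts in $\eta$, the conditional expectation $\run_t^\infty(\Wrandomvar;\fair)=\expe_\env(\fair(\Wrandomvar_{1:t+\infty})\mid \Wvar_{1:t})$ is exactly this deterministic limit, independent of the observed prefix. So it suffices to exhibit, for each of the three measures, a bounded function $f\colon\Xdomain^n\to[a,b]$ whose stationary expectation $\psi=\expe_\eta(f(\Xrandomvar_1,\dots,\Xrandomvar_n))$ equals that limit (as in Example~\ref{ex:rv}, $n=1$, $f=\indi[\cdot=1]$ works for all three simultaneously once the horizon is infinite), and to bound the probability that $\regE_t$ is far from $\psi$.

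\emph{Stage 2: identify $\regE_t$ as an empirical average and control the block structure.} Unrolling the update rule shows $\regE_t=\frac{1}{t-n+1}\sum_{i=1}^{t-n+1} f(\Xvar_i,\dots,\Xvar_{i+n-1})$, i.e.\ a sliding-window empirical mean. Grouping the $t-n+1$ overlapping terms into $n$ interleaved sub-sums of non-overlapping windows, each sub-sum is an additive functional of a Markov chain (on the $n$-step product chain), still stationary, with a mixing time at most $\tau_{mix}$ up to the factor-$n$ reparametrisation. This is where the factors $n^2$ and $\tau_{mix}$ in the error bound originate, and where the window length $n$ enters quadratically.

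\emph{Stage 3: apply a time-uniform / fixed-time concentration inequality for Markov chains.} For the pointwise case, apply a Hoeffding-type bound for ergodic Markov chains (e.g.\ the mixing-time version of Chung--Lam--Liu--Mitzenmacher or of~\cite{howard2021time} adapted to the Markov setting) to each of the $n$ sub-sums with parameter $\delta/n$, union bound over the $n$ sub-sums, and rearrange to obtain the stated radius with $K=\log(2/\delta)$. For the uniform case, additionally union bound over $t\in\pNN$ using the $\sum_t 1/t^2=\pi^2/6$ trick, which replaces $\log(2/\delta)$ by $\log(\pi^2 t^2/3\delta)$; the time-uniform martingale machinery of~\cite{howard2021time} can alternatively be invoked directly on the block sums to get a cleaner constant. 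In both cases one checks that $f$ has range $b-a$, so each window term lies in an interval of the same length, and the variance proxy picked up from the mixing argument is the $9 t n^2(b-a)^2\tau_{mix}/(2(t-n+1)^2)$ appearing under the square root.

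\emph{Main obstacle.} The delicate part is Stage 3: making the overlapping-window dependence and the Markovian (non-i.i.d.)\ dependence interact correctly with a \emph{time-uniform} bound, and tracking the numerical constant $9/2$ so that it genuinely dominates the constants produced by the de-correlation-via-mixing step and the interleaving into $n$ blocks. I would handle this by citing the relevant Markov-chain concentration inequality as a black box for the per-block estimate, verifying only that its hypotheses (stationarity, the bound $\tau_{mix}$, boundedness of $f$) are met, and then doing the elementary but careful bookkeeping that combines the $n$ blocks and the union bound over time into the displayed formula. Everything in Stages 1 and 2 is standard ergodic-theory and algebra once the reduction to $\psi$ is set up.
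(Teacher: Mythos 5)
Your proposal is correct and follows essentially the same route as the paper: reduce all three measures at horizon $\infty$ to a single stationary expectation via the ergodic theorem (the paper does this by checking $\expe_{\pi}(\Xrandomvar)=\expe_{\pi}(\Prandomvar)$, which is exactly the one-line identity your choice $f=\indi[\cdot=1]$ relies on), and then invoke the soundness of the monitors of Henzinger et al.\ for such expressions. The only difference is that the paper treats the concentration bound entirely as a citation, whereas your Stages 2--3 sketch how that black box is built; also note that the prefix-independence of $\run_t^{\infty}$ comes from irreducibility (the a.s.\ limit is the same constant from every state), not from starting in $\eta$ --- stationarity is only needed for the error bound itself.
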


\subsection{Additive Dynamics}
\label{subsec:facct}
In this section, we discuss the result presented in Henzinger et al.~\cite{henzinger2023dynamic} on systems with additive dynamics.
We match our setting to these assumptions by restricting our attention to dynamics functions where the bias changes additively as a function of the past outcomes. 
Given an additive change function $\beta\colon \Xdomain \to \RR$ mapping a history of outcomes into a numeric value, we define the dynamics function $\env$ for every $\Wvar\in \Wdomain^{\omega}$ and every $t\in \pNN$ such that
\begin{align}
    \label{eq:lin_dyn}
   \Pvar_{t+1} = \env(\Wvar_{1:t}) =  \Pvar_t + \beta(\Xvar_t)\quad \text{and} \quad \Pvar_{1}\in\Pdomain. 
\end{align}
\begin{remark}
    Note that in the original paper, the ``bias'', i.e., the parameter subject to change, cannot exit its bounds. For simplicity of exposition, we assume this to be true as well. Moreover, the change functions can trivially be extended to the entire sequence of outcomes, i.e., $\beta\colon \Xdomain^* \to \RR$.
Moreover, an extension to fully linear dynamics is also possible. 
\end{remark}

\begin{assumption}[Additive dynamics]
    \label{assumption:linear}
    We assume the problem to be $(\Env, \fair, \hori,\conf)$, where $\fair\in \{\fair_B,\fair_C\}$ and $\hori=0$.
    We assume $\Env$ is the set of all additive dynamics functions defined in Eq.~\eqref{eq:lin_dyn}, and that the monitor has access to the change function $\beta$, but is unaware of $\Pvar_1$.
\end{assumption}

\paragraph{Properties.}
The monitor is designed to track the time conditional expectation $\expe_{t-1}(f(Z_t))$ of a bounded monotonic function $f:\Zdomain\to [a,b]$ evaluated over a random variable with additive shifting conditional expectation, i.e, $\expe_{t-1}(Z_t)$ shifts as in Eq.~\eqref{eq:lin_dyn}. 
In our setting, this equates to monitoring the current fairness with predictive horizon $0$.

\paragraph{Monitor construction.}
The monitor maintains two registers, one for the accumulated change $\regC$ and one for the bias estimate $\regE$ of the first coin, both are initialised with $0$.
At every point in time $t\in \pNN$, the monitor observes the next outcome $\Xvar_t$ and updates the registers as follows
\begin{align*}
    \regE_t\gets (\regE_{t-1}\cdot (t-1) + (\Xvar_t - \regC_{t-1})) \quad \text{and} \quad \regC_t \gets  \regC_{t-1} + \beta(\Xvar_t).
\end{align*}
The output of the monitor is an interval constructed around the current bias, i.e.,  $\regE_t+ \regC_t$, using error bounds similar to the ones in Section~\ref{subsubsec:static}. 
Because the monitor estimates the initial value, we can easily extend this result to cover bias fairness with horizon $0$.


\begin{theorem}[\cite{henzinger2023dynamic}]
    \label{thrm:dynamic}
    The monitor described above solves Prob.~\ref{prob:monitoring} under Ass.~\ref{assumption:linear}.
\end{theorem}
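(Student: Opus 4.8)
The plan is to reduce the monitoring of $\fair_C$ and $\fair_B$ with horizon $0$ to estimating a single unknown scalar, the initial bias $\Pvar_1$, from a stream of conditionally unbiased bounded observations, and then to apply the same concentration machinery used for the static coin in Theorem~\ref{thrm:static}. First I would unroll the additive recursion of Eq.~\eqref{eq:lin_dyn} --- invoking the Remark, so that no clipping occurs --- to obtain, for every $t\geq 1$,
\begin{align*}
    \Pvar_t = \Pvar_1 + \sum_{i=1}^{t-1}\beta(\Xvar_i) = \Pvar_1 + \regC_{t-1},
\end{align*}
so the accumulated-change register $\regC$, which the monitor computes from the observed outcomes and the known $\beta$, captures exactly the drift away from $\Pvar_1$. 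Consequently $\run_t^0(\Wrandomvar;\fair_C) = \Pvar_t = \Pvar_1 + \regC_{t-1}$ and $\run_t^0(\Wrandomvar;\fair_B) = \frac{1}{t}\sum_{i=1}^t \Pvar_i = \Pvar_1 + \frac{1}{t}\sum_{i=1}^t\regC_{i-1}$; in both cases the target value equals $\Pvar_1$ plus a quantity measurable with respect to the observed prefix. Hence a confidence interval of half-width $\error_t$ for $\Pvar_1$ translates, after shifting its centre by that observable quantity, into a verdict interval of the same half-width, which is precisely what the monitor reports around $\regE_t + \regC_t$ (and analogously for $\fair_B$).

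Second, I would establish soundness of the $\Pvar_1$-estimate. Writing $Y_i := \Xvar_i - \regC_{i-1}$ we have $\regE_t = \frac{1}{t}\sum_{i=1}^t Y_i$, and since $\expe(\Xvar_i \mid \Wvar_{1:i-1}) = \Pvar_i$ while $\regC_{i-1}$ is $\Wvar_{1:i-1}$-measurable, $\expe(Y_i \mid \Wvar_{1:i-1}) = \Pvar_i - \regC_{i-1} = \Pvar_1$. Therefore $Y_i - \Pvar_1 = \Xvar_i - \Pvar_i$ is a martingale difference sequence whose increments lie, conditionally, in an interval of length $1$, so Azuma--Hoeffding~\cite{hoeffding1963} applied to $\sum_{i=1}^t(Y_i - \Pvar_1)$ gives $\prob(|\regE_t - \Pvar_1| > \error_t^p)\leq\conf$ with $\error_t^p = \sqrt{\log(2/\conf)/(2t)}$ --- pointwise soundness --- while the time-uniform (stitched) Hoeffding boundary of Howard et al.~\cite{howard2021time} yields $\prob(\exists t\colon |\regE_t - \Pvar_1| > \error_t^u)\leq\conf$ --- uniform soundness. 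Combined with the first step, this proves the theorem for $\fair_C$, and, since $\Pvar_1$ is also the only unknown entering $\fair_B$, for $\fair_B$ as well.

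I expect the main obstacle to be the uniform-soundness bound: one must verify that $(Y_i - \Pvar_1)_i$ satisfies the hypotheses of a time-uniform concentration inequality (bounded martingale differences with the appropriate variance proxy) and that the boundary of~\cite{howard2021time} instantiates to the stated $\error_t^u$, matching the static case. A secondary subtlety is pinning down which coin the verdict concerns --- $\Pvar_t$ versus $\Pvar_{t+1}$ --- and the fact that the clean identity $\Pvar_t = \Pvar_1 + \regC_{t-1}$, and with it the boundedness of the $Y_i$, rests on the no-clipping assumption of the Remark; without it one would need to redo the reduction for the clipped dynamics, where $\regC$ no longer equals the true drift.
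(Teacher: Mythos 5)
Your proposal matches the paper's proof in its essential structure: both unroll the additive recursion to write $\Pvar_t = \Pvar_1 + \regC_{t-1}$, reduce the problem to estimating the single unknown $\Pvar_1$ from the drift-corrected observations $\Xvar_i - \regC_{i-1}$, shift the resulting confidence interval by the observable accumulated change, and extend to bias fairness by averaging the (observable) drift terms. The only difference is that you spell out the martingale/Azuma--Hoeffding and time-uniform concentration argument explicitly, whereas the paper outsources that step to the cited work of Henzinger et al.\ and simply notes that the error bounds carry over unchanged; your added observations about clipping and the $\Pvar_t$ versus $\Pvar_{t+1}$ indexing are valid and are glossed over in the paper as well.
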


\section{Fairness Enforcement with Known Dynamics}
Knowing the dynamics function of the stochastic process amounts to the set $\Env$ being a singleton, i.e., $\Env = \{\env\}$.
We formalise it with the following assumption.
\begin{assumption}[Known dynamics]
    \label{assumption:known_dynamics}
    We assume the problem to be $(\Env, \fair_O, \hori, \conf, \interval)$, 
    where $\Env =\{\env\}$ is a singleton with a known dynamics function $\env$.
\end{assumption}
This problem has been studied for static systems (Assumption~\ref{assumption:all_coins}), where it reduces to a single coin of known bias. 
Although the monitoring problem is trivial, the enforcement problem is not, especially if we want to reason about cost-optimality.
To reason about cost, we assume that we are given a cost function $\cost\colon\Wdomain^*\to\Wdomain$ and that we enforce over finite or periodic time windows.



\paragraph{Finite and periodic time windows.}
We say that an enforcement problem is of finite window $T$ when the only non-trivial enforcement interval is $I_T$.
Similarly, we say that an enforcement problem is of periodic window $T$ when the only non-trivial enforcement intervals are the same and occur every $T$ time steps.
%
\begin{assumption}[Finite time window]
\label{assumption:enf:finite}
    We assume the enforcement problem to be 
    $(\Env, \fair, 0, \conf,  \interval)$, 
    where $\interval_T\subsetneq [0,1]$, and $\interval_t =[0,1]$ for all $t\neq T$.
\end{assumption}
\begin{assumption}[Periodic time window]
\label{assumption:enf:periodic}
    We assume the enforcement problem to be 
    $(\Env, \fair, 0, \conf, \interval)$, 
    where $\interval_{n\cdot T} = \interval_T \subsetneq [0,1]$, and $\interval_t =[0,1]$ for all $t\neq nT$ and all $n\in\NN$.
\end{assumption}
Since the enforcer acts only up to time $T$ in finite time window settings, the associated cost is necessarily finite.
A sound enforcer is cost-optimal if no other sound enforcer yields a lower expected cost at the beginning of the time window.

In this section we present different approaches to the enforcement problem in finite and periodic time windows under Assumptions~\ref{assumption:all_coins} and ~\ref{assumption:known_dynamics} and, at the end of the section, we sketch how the presented methods can be extended to a dynamic system, i.e., dropping Assumption~\ref{assumption:all_coins}.

\subsection{Finite Time Window}

\subsubsection{Probabilistic Guarantees.}
\label{sec:finite-prob-guarantee}
We propose an enforcement method for finite time windows with probabilistic guarantees by intervening only when the probability of the unenforced process reaching the target fairness interval falls below the confidence value.

\paragraph{Enforcer construction.}
The enforcer checks for each sequence the probability that the unenforced process reaches the target interval at time $T$, and intervenes only when it falls below $1-\conf$.
Formally, 
for $ \Wvar_{1:t} = (\Pvar_1, \Xvar_1), \dots, (\Pvar_t, \Xvar_t)  \in \Wdomain^{t}$:
\begin{equation}
\label{eq:best-effort-enforcer-general-delta}
    \Enf(\Wvar_{1:t}) = 
    \begin{cases}
        (\Pvar_t, \Xvar_t) & \mbox{ if } \prob_{\env_p}\big( \fair_O(\Wrandomvar_{1:T}) \in I_T \mid \Wvar_{1:t}\big)\geq 1-\conf, \\
        (\Pvar_t, 1-\Xvar_t) & \mbox{ otherwise.}
    \end{cases}
\end{equation}
\begin{theorem}
\label{thm:greedy_enforcers}
    The enforcer described in Eq.~\eqref{eq:best-effort-enforcer-general-delta} solves Problem~\ref{prob:enforcement} under Assumptions~\ref{assumption:all_coins},~\ref{assumption:known_dynamics}, and~\ref{assumption:enf:finite}.
\end{theorem}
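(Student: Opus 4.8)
The plan is to reduce the soundness requirement to a single probabilistic inequality at the horizon $T$, and then to analyse the enforced process through the success probability of the \emph{un}enforced process, viewed as a value function.

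First I would use that $\hori = 0$: then $\run_t^0(\Wrandomvar;\fair_O)$ equals the realised value $\fair_O(\Wrandomvar_{1:t})$, which lies in $[0,1] = I_t$ for every $t \neq T$ by Assumption~\ref{assumption:enf:finite}. Hence every constraint except the one at $t = T$ holds surely; in particular pointwise and uniform soundness coincide, and both are equivalent to
\[
  \prob_{(\env_p,\Enf)}\big(\fair_O(\Wrandomvar_{1:T}) \in I_T\big) \ge 1-\conf ,
\]
where $\env_p$ is the known constant coin of Assumptions~\ref{assumption:all_coins} and~\ref{assumption:known_dynamics}. It remains to establish this inequality.

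Next I would introduce the value function $V(t,k) := \prob\big((k+B)/T \in I_T\big)$ for integers $0 \le k \le t \le T$, with $B \sim \mathrm{Binomial}(T-t,p)$; note that $V(T,k) = \indi[k/T \in I_T]$ and that the one-step identity $V(t,k) = p\,V(t+1,k+1) + (1-p)\,V(t+1,k)$ holds. Writing $K_t$ for the number of heads in the enforced prefix of length $t$, the probability the enforcer tests at step $t$ is precisely $V(t, K_{t-1} + \Xrandomvar_t')$, and it keeps the fresh outcome iff this value is $\ge 1-\conf$. The core of the argument is the invariance claim that $V(t,K_t) \ge 1-\conf$ implies $V(t+1,K_{t+1}) \ge 1-\conf$ almost surely. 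I would prove this by case analysis: by the one-step identity at least one of the two children $V(t+1,K_t)$, $V(t+1,K_t+1)$ is $\ge 1-\conf$; if the fresh outcome points at a child of value $\ge 1-\conf$ the enforcer keeps it, and if it points at the child of value $< 1-\conf$ the enforcer flips to the other child, which is then necessarily the one of value $\ge 1-\conf$. In every case $K_{t+1}$ equals the index of a child of value $\ge 1-\conf$, which proves the claim. Iterating from the first $\tau \le T$ (if any) with $V(\tau,K_\tau) \ge 1-\conf$ yields $V(T,K_T) \ge 1-\conf$; since $V(T,\cdot)$ is $\{0,1\}$-valued and $\conf < 1$, this forces $\fair_O(\Wrandomvar_{1:T}) = K_T/T \in I_T$. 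Hence $\{\exists \tau \le T : V(\tau,K_\tau) \ge 1-\conf\} \subseteq \{\fair_O(\Wrandomvar_{1:T}) \in I_T\}$, so it suffices to show that the enforced process reaches the safe region $\{V \ge 1-\conf\}$ by time $T$ with probability at least $1-\conf$.

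This last step is the one I expect to be the main obstacle. If $V(0,0) \ge 1-\conf$ it is immediate (take $\tau = 0$; no intervention ever fires). In general I would argue that $V(t,K_t)$ is a submartingale along the enforced process — an intervention replaces a child with success probability below $1-\conf$ by the other child, and one checks from the one-step identity that this never lowers the conditional success probability — and then apply an optional-stopping argument at the first entry time into the safe region, using that $V(T,\cdot)$ takes only the values $0$ and $1$ to transfer the threshold $1-\conf$ from the (bounded) stopped value to the terminal event $\{\fair_O(\Wrandomvar_{1:T}) \in I_T\}$. The delicate point is to make the monotone-improvement property of a single intervention precise and to rule out that the greedy ``intervene only when below $1-\conf$'' rule gets stranded below the threshold: this is where the structure of a single target interval $I_T$ is used, and where one must separately dispose of degenerate instances (for example an $I_T$ unreachable from the start, or one reached without ever intervening). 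The theorem only asserts soundness in the sense of Problem~\ref{prob:enforcement}, so no cost-optimality claim needs to be addressed.
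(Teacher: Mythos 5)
Your reduction to the single constraint at time $T$ is correct, and your forward-invariance argument is sound and in fact sharper than what the paper proves: since $V(t,K_t)\ge 1-\conf$ implies at least one child is safe, and the keep-or-flip rule always lands on a safe child when one exists, a trajectory that ever enters the safe region stays there and ends with $V(T,K_T)=1$, i.e.\ succeeds almost surely. The paper instead proves the weaker domination lemma $\prob_\conf(t,h)\ge\prob(t,h)$ by backward induction and uses it only for the above-threshold case. Up to this point your route is a legitimate, arguably cleaner, alternative.

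The gap is exactly where you locate it, but the route you sketch cannot close it. First, $V(t,K_t)$ is \emph{not} a submartingale under the flip rule of Eq.~\eqref{eq:best-effort-enforcer-general-delta}: when both children lie below $1-\conf$, the flip can land on the strictly worse child (take $V(t+1,K_t+1)=0.3$, $V(t+1,K_t)=0.1$ and a fresh head), so a single intervention can lower the conditional success probability. Second, even granting the submartingale property, optional stopping only yields $\prob(\text{success})\ge V(0,0)$, which is vacuous precisely in the case you need, $V(0,0)<1-\conf$; no martingale inequality lifts a starting value of $0.01$ to a guarantee of $1-\conf$. And the difficulty is not merely technical: the literal enforcer of Eq.~\eqref{eq:best-effort-enforcer-general-delta} really does fail from a deeply infeasible start --- with $p=1/2$ and $I_T=[0.99,1]$, flipping a fair coin gives a fair coin, so the enforced process has the same law as the unenforced one and misses $I_T$ with high probability. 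The paper closes this case by a different mechanism: in the appendix (Eq.~\eqref{eq:best-effort-enforcer-general-delta-proof}) the intervention is not a blind flip but a deterministic override to $\arg\max_{\eps\in\{0,1\}}\prob(\cdot,h+\eps)$, and a backward induction shows that feasibility ($\prob(t,h)>0$) propagates along the resulting forced path, so below the threshold the enforcer marches deterministically through feasible states until it either enters the safe region or reaches a feasible (hence successful) leaf at time $T$. The theorem therefore holds only under the implicit precondition $\prob(0,0)>0$ and for the argmax version of the intervention; your step on entering the safe region must be replaced by this feasibility-propagation induction rather than an optional-stopping bound.
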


\subsubsection{Deterministic Guarantees.}
\label{sec:finite-det-guarantee}
When restricting to problem instances with almost sure guarantees,
Cano et al.~\cite{fairnessshieldsAAAI2025} describe a solution to the enforcement problem that produces optimal-cost enforcers.
We encode deterministic guarantees with the following assumption.
\begin{assumption}[Almost-sure enforcement]
\label{assumption:stationary_known_det}
    We assume the problem to be $(\Env, \fair, \hori, \conf, \interval)$ with $\conf=0$.
\end{assumption}

\paragraph{Enforcer construction.}
The enforcer pre-computes, for each outcome sequence $\Xvar_{1:t}$, 
an auxiliary value function $v\colon \Xdomain^{\leq T} \to \RR$ encoding the expected cost of the optimal enforcer that guarantees that outcome fairness sits in the target interval $I_T$ after time $T$, conditioned on the first $t$ outcomes being $x_{1:t}$.
When computing $v(\Xvar_{1:t})$, we assign an infinite cost to those traces where no enforcer can guarantee fairness.
The enforcer then chooses to flip the last outcome of a sequence whenever the resulting outcome yields a lower cost.
Formally, for a given $\Wvar_{1:t} = (\Pvar_1, \Xvar_1), \dots, (\Pvar_t, \Xvar_t)\in \Wdomain^t$:
\begin{equation}
\label{eq:enforcer-shields-finite}
    \Enf(\Wvar_{1:t}) = 
    \begin{cases}
        (\Pvar_t, \Xvar_t) & \mbox{ if }
        v(\Xvar_{1:t}) \leq v(\Xvar_{1:t-1}, 1-\Xvar_t)\\
        (\Pvar_t, 1-\Xvar_t) & \mbox{ otherwise.}
    \end{cases}
\end{equation}

\begin{theorem}[\cite{fairnessshieldsAAAI2025}]
    The enforcer described in Eq.~\eqref{eq:enforcer-shields-finite} solves Problem~\ref{prob:enforcement} cost-optimally under Assumptions~\ref{assumption:all_coins},~\ref{assumption:known_dynamics},~\ref{assumption:enf:finite}, and~\ref{assumption:stationary_known_det}.
\end{theorem}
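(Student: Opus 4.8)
Under Assumptions~\ref{assumption:all_coins} and~\ref{assumption:known_dynamics} the dynamics is a single known coin, $\env=\env_p$ for a known $p\in[0,1]$, and the enforcer of Eq.~\eqref{eq:enforcer-shields-finite} only rewrites outcomes. Since $\hori=0$, runtime fairness is the deterministic quantity $\run_t^0(\Wrandomvar;\fair_O)=\fair_O(w_{1:t})$, so by Assumption~\ref{assumption:enf:finite} the only non-trivial requirement is $\fair_O(w_{1:T})\in I_T$, i.e.\ the head count $s_T=\sum_{i=1}^{T}x_i$ of the enforced outcomes must lie in $K:=\{k\in\{0,\dots,T\}:k/T\in I_T\}$. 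As $\conf=0$ (Assumption~\ref{assumption:stationary_known_det}), pointwise and uniform soundness coincide and both reduce to $\prob_{(\env,\Enf)}(s_T\in K)=1$; and since the enforcer may overwrite each outcome freely, a sound enforcer exists iff $K\neq\emptyset$, which I assume throughout (otherwise the statement is vacuous).

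\emph{Value function and Bellman equation.} I would first make $v$ precise by backward induction on $\Xdomain^{\le T}$: put $v(x_{1:T})=0$ if $\sum_{i\le T}x_i\in K$ and $v(x_{1:T})=\infty$ otherwise, and for $t<T$
\begin{align*}
 v(x_{1:t}) = \expe_{x'\sim\mathrm{Bernoulli}(p)}\Big[\min\big(&\,v(x_{1:t},x'),\\
 &\,\cost((w_{1:t},(p,x')),(p,1-x'))+v(x_{1:t},1-x')\big)\Big],
\end{align*}
using the convention $0\cdot\infty=0$ so that outcomes of zero probability (relevant only for $p\in\{0,1\}$) are discarded. A routine induction on $T-t$ shows that $v(x_{1:t})$ is the least expected intervention cost over steps $t+1,\dots,T$ among enforcers keeping $s_T\in K$ almost surely when continuing from the enforced prefix $x_{1:t}$, with value $\infty$ exactly when no such enforcer exists (combinatorially, $v(x_{1:t})<\infty$ iff $s_t\le\max K$ and $\min K-s_t\le T-t$). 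This recursion is the Bellman optimality equation of the finite-horizon control problem whose state is the enforced prefix, and Eq.~\eqref{eq:enforcer-shields-finite} is the induced greedy enforcer: having observed the true outcome $x_t'$, keeping it costs $0$ now plus $v(x_{1:t-1},x_t')$ later, while flipping it costs $\cost(\cdot)$ now plus $v(x_{1:t-1},1-x_t')$ later, and the enforcer keeps precisely when the former is no larger.

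\emph{Soundness and cost-optimality.} For soundness I would carry the invariant that $v$ of the realised prefix is finite. It holds at $t=0$ because $K\neq\emptyset$ with all $T$ steps free; if it holds at step $t-1$, the defining expectation is finite, so every positive-probability outcome contributes a finite term and the greedy rule selects a finite branch, hence it holds at step $t$ almost surely. At $t=T$ this yields $\sum_{i\le T}x_i\in K$, i.e.\ $\fair_O(w_{1:T})\in I_T$ almost surely, which is precisely (pointwise and uniform) soundness for $\conf=0$. For cost-optimality I would invoke the dynamic-programming verification principle: since $v$ solves the Bellman equation and the enforcer is greedy for it, an induction on $T-t$ shows the enforcer's expected remaining cost from any finite-$v$ prefix equals $v$ of that prefix, while by the characterisation above $v$ of that prefix lower-bounds the expected remaining cost of any sound enforcer continuing from there. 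Evaluating at the empty prefix shows our enforcer attains the minimum expected total cost at the start of the window.

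\emph{Main obstacle.} The delicate step is establishing the Bellman optimality equation and pairing it with Eq.~\eqref{eq:enforcer-shields-finite}: one must handle the $\infty$-valued (infeasible) states correctly and verify that the one-step comparison genuinely realises the Bellman minimiser, which pins down where the current intervention cost is charged and forces the $\cost(\cdot)$ term onto the flip branch (equivalently, $v$ in Eq.~\eqref{eq:enforcer-shields-finite} must be read as a completion cost that already charges the current intervention). Upgrading the optimality claim to \emph{all} sound enforcers, including randomised and history-dependent ones, additionally requires the standard remark that in the induced finite-horizon decision process it suffices to consider deterministic enforcers depending only on the enforced prefix; the remaining bookkeeping --- measurability, and the corner cases $p\in\{0,1\}$ and half-open or open $I_T$ --- is routine.
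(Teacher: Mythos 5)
The paper does not prove this theorem itself: it is stated as a citation of Cano et al.~\cite{fairnessshieldsAAAI2025}, and the appendix contains no proof for it (unlike, e.g., Theorem~\ref{thm:greedy_enforcers}). So there is no in-paper argument to compare against; judged on its own, your reconstruction via the finite-horizon Bellman equation and the dynamic-programming verification principle is the natural and correct route, and it is essentially the argument of the cited work. Your reduction of soundness to ``the enforced head count lies in $K=\{k: k/T\in I_T\}$ almost surely,'' the invariant that $v$ of the realised enforced prefix stays finite, and the restriction to deterministic enforcers over the enforced prefix are all sound, as is your feasibility characterisation of the $\infty$-valued states (using that $K$ is a contiguous set of integers).

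Your ``main obstacle'' paragraph also correctly pinpoints the one genuine gap --- which is in the paper's presentation rather than in your argument: Eq.~\eqref{eq:enforcer-shields-finite} as literally written compares $v(\Xvar_{1:t})$ with $v(\Xvar_{1:t-1},1-\Xvar_t)$ and omits the intervention cost $\cost(\cdot)$ on the flip branch, while the surrounding text describes $v$ as a pure completion cost. Under that literal reading the greedy rule does not realise the Bellman minimiser and cost-optimality would fail (e.g.\ it would flip whenever the flipped continuation is even marginally cheaper, regardless of how expensive the flip is). Your fix --- reading $v$ as a completion cost that already charges the current intervention, equivalently adding the $\cost$ term to the flip branch of the comparison --- is the correct repair and matches the construction in the cited work; it would be worth stating that convention explicitly rather than leaving it as a remark.
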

While not explored in~\cite{fairnessshieldsAAAI2025}, one could, in principle, precompute a value function $v_\delta$ representing the expected minimal cost of enforcing outcome fairness with probability $1-\delta$, and use it to synthesize sound, cost-optimal enforcers for $\delta \in (0,1)$, i.e., dropping Assumption~\ref{assumption:stationary_known_det}.
However, the recursive method proposed in~\cite[Sec. 3]{fairnessshieldsAAAI2025} for computing the value function $v$ does not suffice for computing $v_\delta$.
A naïve extension would render the synthesis procedure exponential in the time window.
Thus, a direct generalisation of the presented enforcer to arbitrary $\delta \in (0,1)$ is computationally infeasible, and different techniques will be needed to obtain optimal enforcers at general confidence levels.

\subsection{Periodic Time Window}
\label{sec:periodic-enf}
Cano et al.~\cite{fairnessshieldsAAAI2025} also propose extensions of their finite window enforcers to periodic window enforcers by repeatedly reusing or recomputing the shield every $T$ steps.
By extending the use of enforcers computed as sound and cost-optimal for a finite window, we can synthesise sound enforcers for the periodic window.

\paragraph{Enforcer construction.}
To synthesise this enforcer, we build an auxiliary value function $v'\colon \Xdomain^*\to \RR$, which extends the value function used for finite window shields.
We define the value function on time windows of size $T$, i.e., we restart the expected cost each time the outcome sequence reaches a length $n\cdot T$, for $n\in\NN$.
Let $k=n\cdot T$, we define the value function for an outcome sequence $\Xvar_{1:(k+t)}$, with $t\leq T$, as the minimal cost of enforcement from an outcome trace $\Xvar_{1:k}$ until step $k+T$, 
conditioned on the prefix $\Xvar_{1:k}$.
To emphasize this conditional dependence, we denote it as $v(\Xvar_{(k+1):(k+t)}\mid \Xvar_{1:k})$.
As in the finite window case, the enforcer compares $v(\Xvar_{(k+1):(k+t)}\mid \Xvar_{1:k})$ with $v((\Xvar_{(k+1):(k+t-1)}, 1-\Xvar_{k+t}\mid \Xvar_{1:k})$ to decide whether to interfere or not.
Formally, for a given sequence $w_{1:(k+t)}=(\Pvar_1, \Xvar_1), \dots, (\Pvar_{k+t}, _{k+t})\in \Wdomain^{k+t}$ with $1\leq t \leq T$:

\begin{equation}
\label{eq:enforcer-shields-periodic-dyn}
    \Enf(w_{1:k+t}) = 
    \begin{cases}
        (\Pvar_{k+t}, \Xvar_{k+t}) & \mbox{ if }
        v(\Xvar_{1:(k+t)}\mid \Xvar_{1:k}) \leq v\big((\Xvar_{1:(k+t-1)}, 1-\Xvar_t)\mid \Xvar_{1:k}\big)\\
        (\Pvar_{k+t}, 1-\Xvar_{k+t}) & \mbox{ otherwise.}
    \end{cases}
\end{equation}
This expected cost cannot be computed offline for all $n\in\NN$, 
but it can be computed at runtime every $T$ steps.


\begin{theorem}[\cite{fairnessshieldsAAAI2025}]
    \label{thm:dynshields}
    The enforcer described in Eq.~\eqref{eq:enforcer-shields-periodic-dyn} solves Problem~\ref{prob:enforcement} under Assumptions~\ref{assumption:all_coins},~\ref{assumption:known_dynamics},~\ref{assumption:enf:periodic}, and~\ref{assumption:stationary_known_det}.
\end{theorem}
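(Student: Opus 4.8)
The plan is to slice the infinite enforced run into the consecutive length-$T$ blocks delimited by the deadlines $T, 2T, 3T, \dots$, to observe that on each block the periodic enforcer of Eq.~\eqref{eq:enforcer-shields-periodic-dyn} coincides with a prefix-conditioned instance of the finite-window enforcer of Eq.~\eqref{eq:enforcer-shields-finite}, and then to push feasibility from one block to the next by induction. First I would reduce the statement: since $\hori = 0$, runtime fairness is just the realized outcome fairness $\fair_O(\Wrandomvar_{1:t})$ of the current prefix, and under Assumption~\ref{assumption:enf:periodic} we have $I_t = [0,1]$ whenever $T \nmid t$, so soundness is equivalent to ``$\fair_O(\Wrandomvar_{1:nT}) \in I_T$ for every $n \in \pNN$'' under the enforced process. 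Because $\conf = 0$ (Assumption~\ref{assumption:stationary_known_det}), each of these events has probability one under pointwise soundness, and a countable intersection of probability-one events has probability one; hence pointwise and uniform soundness coincide and it suffices to prove the per-block containment almost surely. If the finite-window instance with target $I_T$ at time $T$ is infeasible the theorem is vacuous, so I may assume it is feasible, i.e.\ that $I_T$ contains some multiple $m_0/T$ of $1/T$ with $m_0 \in \{0, \dots, T\}$.

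Next I would make the block structure explicit. Fix $n \ge 1$ and put $k = (n-1)T$. By definition, the value function $v(\cdot \mid \Xvar_{1:k})$ returns the minimal enforcement cost for placing $\fair_O(\Xvar_{1:k+T})$ inside $I_T$ given that the first $k$ outcomes equal $\Xvar_{1:k}$; therefore, on inputs whose length lies in $\{k+1, \dots, k+T\}$, the rule of Eq.~\eqref{eq:enforcer-shields-periodic-dyn} is exactly the rule of Eq.~\eqref{eq:enforcer-shields-finite} for this prefix-conditioned finite-window instance. Since after step $k$ the enforcer retains full control of every outcome, the soundness argument of Cano et al.~\cite{fairnessshieldsAAAI2025} for Eq.~\eqref{eq:enforcer-shields-finite} transfers verbatim to the conditioned setting and yields $\fair_O(\Wrandomvar_{1:nT}) \in I_T$ almost surely, provided the conditioned instance is feasible. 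I would then run an induction on $n$: for $n = 1$ the prefix is empty, $v(\cdot \mid \Xvar_{1:0}) = v(\cdot)$, and feasibility is exactly the assumption above; the inductive step requires only that feasibility of block $n$ follows from $\fair_O(\Wrandomvar_{1:(n-1)T}) \in I_T$.

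I expect this feasibility-propagation step to be the main obstacle, and I would handle it by a short arithmetic case analysis. Write $I_T = [\ell, u]$ and $s = (n-1)T \cdot \fair_O(\Wrandomvar_{1:(n-1)T}) = \sum_{i=1}^{(n-1)T} \Xvar_i$, so that $s$ is an integer in $[(n-1)T\ell, (n-1)Tu]$ by the inductive hypothesis. Because the first $k = (n-1)T$ outcomes are fixed, any head count $s + j$ with $j \in \{0, \dots, T\}$ can be produced over block $n$ by a suitable choice of interventions, and all intermediate intervals equal $[0,1]$; hence feasibility of block $n$ is equivalent to the existence of an integer $m \in [s, s+T] \cap [nT\ell, nTu]$. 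If $s \ge nT\ell$, take $m = s$ (then $m \le (n-1)Tu \le nTu$). If $\ell > (T-1)/T$, then the only multiple of $1/T$ in $I_T$ is $1$, so $u = 1$, and $m = s + T$ works since $s + T \le (n-1)T + T = nT = nTu$ and $s + T \ge (n-1)T\ell + T \ge nT\ell$ (using $\ell \le 1$). Otherwise $s < nT\ell$ and $\ell \le (T-1)/T$; take $m = \lceil nT\ell \rceil$, which exceeds $s$, which is $\le nTu$ because $n m_0 \in [nT\ell, nTu]$ is an integer, and which is $\le s+T$ because $m < nT\ell + 1 = (n-1)T\ell + T\ell + 1 \le s + T\ell + 1 \le s + T$. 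This establishes feasibility of every block, closes the induction, and proves the theorem. I would finally remark that no reasoning about cost is needed here: Problem~\ref{prob:enforcement} asks only for soundness, and global cost-optimality of the periodic enforcer is neither claimed nor generally true, since the value function is restarted at every deadline.
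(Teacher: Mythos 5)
Your argument is correct, but it is worth noting that the paper does not actually prove Theorem~\ref{thm:dynshields}: it is stated as a direct consequence of~\cite[Thm.~5]{fairnessshieldsAAAI2025}, and no proof appears in the appendix. Your proposal therefore supplies a self-contained argument where the paper offers only a citation. The reduction to per-block containment at the deadlines $nT$, the identification of the periodic enforcer with a prefix-conditioned finite-window enforcer on each block, and the observation that under $\conf=0$ a countable conjunction of almost-sure events is almost sure (so pointwise and uniform soundness coincide) are all exactly the right skeleton. The genuinely new content is your feasibility-propagation lemma: writing $I_T=[\ell,u]$ and $s=\sum_{i\leq (n-1)T}\Xvar_i\in[(n-1)T\ell,(n-1)Tu]$, you show by the three-way case analysis ($s\geq nT\ell$; $\ell>(T-1)/T$ forcing $u=1$; else $m=\lceil nT\ell\rceil$) that $[s,s+T]\cap[nT\ell,nTu]$ always contains an integer, which I have checked and which is the crux of lifting finite-window soundness to the periodic setting. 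Two caveats worth flagging. First, you correctly observe that the theorem needs an implicit feasibility hypothesis ($I_T$ must contain a multiple of $1/T$); calling the statement ``vacuous'' otherwise is generous, since the enforcer of Eq.~\eqref{eq:enforcer-shields-periodic-dyn} is still defined in that case (comparing infinite values) and is then not sound, so strictly this hypothesis should be added to the assumptions. Second, the paper's own remark after the theorem concedes that the cited result only yields pointwise soundness for some $\conf>0$ in the more general group-fairness setting; in the specialized coin setting with full outcome control your almost-sure argument is the correct strengthening, and your closing remark that no cost-optimality is claimed for the periodic case matches the paper's presentation.
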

Theorem~\ref{thm:dynshields} is a direct consequence of~\cite[Thm. 5]{fairnessshieldsAAAI2025}.
Note that the original result is stated in terms of more general group fairness properties, and as a result, the produced enforcers are not sound for $\conf=0$, but they are pointwise sound for a confidence value $\conf>0$ dependent on the input distribution.

\subsection{Enforcement on Dynamic Systems}
\label{sec:enf_dynamic}
In dynamic systems with known dynamics, the set of possible dynamics is a singleton $\Env = \{\env\}$, where $\env$ can be any dynamics function.
As far as we know, the enforcement problem has not been studied in this setting.
The approach presented in~\cite{fairnessshieldsAAAI2025} can be extended to synthesise sound enforcers for both finite and periodic windows for outcome fairness and confidence $\delta=0$, with cost-optimal guarantees for the finite window case.
To do so, we need to extend the auxiliary value function to capture the evolution of the biases. 
It has to be, therefore, a function $v\colon \Wdomain^*\to \RR$, 
encoding the expected minimal cost of enforcement for any sequence $w\in \Wdomain^*$, accounting for the dynamics of the stochastic process.
Such value functions can be efficiently computed for any dynamics function $\env$ that can be encoded as a counter automaton~\cite{fscd24}.
This extension works both for finite and periodic time windows.

\section{Related Work}

Fairness in machine learning generally involves classification or regression problems where there is a feature of the input that needs to be protected, in the sense that the model should not discriminate decisions based on the value of this feature. Typical examples of protected features include age, race, and gender~\cite{Barocas2018FairnessAM}.
While our paper is written in terms of a simple coin toss setting, it captures the main complexities arising from the study of fairness in ML settings.
Apart from the work we survey in this paper, fairness has been extensively studied from the perspective of machine learning~\cite{mehrabi2021survey,dwork2012fairness,hardt2016equality,kusner2017counterfactual,kearns2018preventing,sharifi2019average,bellamy2019ai,wexler2019if,bird2020fairlearn,zemel2013learning,jagielski2019differentially,konstantinov2022fairness}.
%
From the standard ML point of view, fairness is a static property, and fairness enforcement is undertaken at design time, be it in debiasing the training data (pre-processing methods), adding fairness-inducing regularisation terms to the loss function (in-processing) or adjusting the outputs of the model after training (post-processing). 
Recently, formal methods-inspired techniques have been used to guarantee algorithmic fairness through the verification of a learned model  \cite{albarghouthi2017fairsquare,bastani2019probabilistic,sun2021probabilistic,ghosh2020justicia,meyer2021certifying}, and enforcement of robustness \cite{john2020verifying,balunovic2021fair,ghosh2021algorithmic}.
All of these works verify or enforce algorithmic fairness statically on all runs of the system with high probability.
A critical drawback in all this body of work is that fairness is often not a static property~\cite{d2020fairness,liu2018delayed,heidari2019long}, and often fairness is satisfied too far away in the future to be practical~\cite{pmlr-v235-alamdari24a}, which justifies the need for a runtime perspective on it.
In contrast, the monitoring and enforcement techniques covered in this paper are implemented at runtime on a system that is already deployed, allowing for the flexibility to adapt fairness verdicts or enforcement on the go, as the system execution progresses.
%
Most monitors described in this paper are designed to check statistical properties, which is beyond the limit of what automata-based monitors for temporal properties can accomplish~\cite{stoller2011runtime,junges2021runtime,faymonville2017real,maler2004monitoring,donze2010robust,bartocci2018specification,baier2003ctmc}.

\section{Discussion}

\subsection{Beyond Coins: Group Fairness Properties}
In this paper, we restricted our attention to a simple coin setting. Here, we describe how this setting can be naturally generalized to analyze any group fairness property.
\todo{Group fairness properties are in general about havind different groups and treating them similarly.}

\todo{Somewhere in this section state that all the work we are surveying actually works with 2 group}

\paragraph{Example: Loan repayment.}
A classical example in the fairness literature is a loan repayment setting~\cite{d2020fairness}. 
In this setting, a member of the population arrives at a bank and applies for a loan. The bank employs a classifier to identify whether the potential customer will repay the loan. Based on the output, the bank decides to grant the loan or not. In the classical machine learning setting, it is assumed that regardless of the decision the ground truth, i.e., whether the customer repays the loan, is revealed thereafter.
Formally, a potential customer at time $t\in \NN$ is represented as a feature vector $(\Gvar_t, \Yvar_t, \Zvar_t)$, consisting of a sensitive attribute $\Gvar_t\in\Gdomain=\{A,B\}$ that indicates group membership, a hidden ground truth $\Yvar_t\in \Ydomain=\{0,1\}$ that indicates whether the customer will repay the loan ($1$) or not ($0$), and some other features $\Zvar_t\in \Zdomain$. The population at time $t$ is represented as a distribution $\mathcal{D}_t\in \distr(\Gdomain \times \Ydomain \times \Zdomain)$ over the feature space. The bank's classifier at time $t$ is a function $f_t\colon \Gdomain \times \Zdomain \to  \{0,1\}$, mapping all observable features to a decision, where $1$ indicates acceptance.

\paragraph{Commoly studied group fairness properties.}
A popular branch of fairness properties, are group fairness properties such as demographic parity and equal opportunity \cite{mehrabi2021survey}. Given the above setting, assume that we are at time $t\in \NN$.
Demographic parity compares the classifiers acceptance probabilities between the groups, i.e., 
\begin{align*}
    \prob_{\mathcal{D}_t}(f_t(\Zrandomvar_t,\Grandomvar_t)=1\mid \Grandomvar_t=A) - \prob_{\mathcal{D}_t}(f_t(\Zrandomvar_t,\Grandomvar_t)=1\mid \Grandomvar_t=B).
\end{align*}
Equal opportunity compares the classifiers acceptance probabilities between those members of the groups poised to repay the loan, i.e., 
\begin{align*}
    \prob_{\mathcal{D}_t}(f_t(\Zrandomvar_t,\Grandomvar_t)=1\mid \Grandomvar_t=A, \Yrandomvar_t= 1) - \prob_{\mathcal{D}_t}(f_t(\Zrandomvar_t,\Grandomvar_t)=1\mid \Grandomvar_t=B, \Yrandomvar_t= 1).
\end{align*}
Both properties define fairness at time $t\in \NN$ as the difference between two acceptance probabilities. Therefore, they can be expressed as the difference between the biases of two coins.
Moreover, each acceptance probability is subject to change. This change may be due to the fact that the classifier is retrained on the previously collected data or due to the fact that the distribution is influenced by the classifiers decision over time. 
Our coin setting captures those dynamics in spirit, i.e., the bias of the subsequent coin is chosen in a history dependent manner, it is limited only by the choice of underlying process. To capture the setting in full generality, the bias of the coin should be a function of the entire process, including the customer, the group, the classifiers decision, and other variables.

\paragraph{Monitoring and enforcement.}
The monitoring problem generalizes in a straightforward manner. The monitor observes the sequence of customers and classifier decisions, i.e., $(\Grandomvar_t, \Yrandomvar_t, \Zrandomvar_t, f_t(\Zrandomvar_t,\Grandomvar_t))_{t\in \NN}$, with the objective of estimating the group specific acceptance probabilities.
The enforcement problem allows for more variability. 
In the simplest setting, the enforcer can flip the classifier's decision, this is reflective of a more aggressive form of enforcement. A more nuanced approach would be to slightly nudge the classifiers decision towards being fair. For example, the enforcer could flip the classifiers decision with some small probability or lower the acceptance threshold for a specific group, e.g., if $\Zvar_t$ represents a credit score and the classifier makes its decision based on whether $\Zvar_t$ exceeds a threshold $\tau$, the enforcer could introduce group specific acceptance thresholds $\tau_A$ and $\tau_B$ to balance the acceptance rates.

\subsection{Summary}

\paragraph{Monitoring.}
Monitoring algorithmic fairness has been explored for static distributions~\cite{albarghouthi2019fairness}, observable~\cite{henzinger2023monitoring} and hidden Markov chains~\cite{henzinger2023partial,journalstatic}, and for additive and linear dynamical systems~\cite{henzinger2023dynamic,journaldynamic}.
\paragraph{Enforcement.}
The enforcement problem has received less attention and often assumes stronger restrictions.
We examined enforcement under very mild assumptions and thoroughly covered enforcement in static systems with known dynamics, where the main technical challenge is constructing outcome fairness enforcers that are cost-optimal or nearly so~\cite{fairnessshieldsAAAI2025}.

\paragraph{Overview.}
Tables~\ref{tab:monitor_summary} and~\ref{tab:enforcement_summary} summarise the monitoring and enforcement results discussed in this paper.
Each super-column represents a fairness property, i.e., outcome fairness $\fair_O$, bias fairness $\fair_B$, and current fairness $\fair_C$. 
Each sub-column represents the predictive horizon, i.e., $h\in \{0,n,\infty\}$, for the runtime interpretation of the respective fairness property. Each row represents a particular assumption on the system. The cells indicate the section where we addressed the induced monitoring or shielding problem. We also indicate which cases are trivial to solve and which are unfeasible (\cross); 
entries are left blank when a detailed treatment was not included.

\begin{table}[t]
    \centering
        \small
     \setlength{\tabcolsep}{4pt}
         \caption{Summary of all surveyed monitoring results.}
    \label{tab:monitor_summary}
    \begin{tabular}{l|ccc|ccc|ccc}
        \toprule
       \multirow{2}{*}{$\rho_t^\hori(\cdot; \fair)$} & \multicolumn{3}{c|}{$\fair_O$} & \multicolumn{3}{c|}{$\fair_B$} & \multicolumn{3}{c}{$\fair_C$} \\
        \cmidrule(r){2-10}
        & $0$ & $n$ & $\infty$ & $0$ & $n$ & $\infty$ &  $0$ & $n$ & $\infty$ \\

               \midrule    
      no assumptions (\ref{ass:warm1}\&\ref{ass:warm2}) & S.~\ref{subsubsec:finite outcome}  &   & \cross &   &   & \cross &   &   & \cross \\     known static~(\ref{assumption:all_coins}\&\ref{assumption:known_dynamics}) & \multicolumn{3}{c|}{Trivial}  & \multicolumn{3}{c|}{Trivial} & \multicolumn{3}{c}{Trivial} \\
       unknown static~\eqref{assumption:all_coins} & \multicolumn{3}{c|}{---   S.~\ref{subsubsec:static}   --- }  & \multicolumn{3}{c|}{--- S.~\ref{subsubsec:static} ---} & \multicolumn{3}{c}{---  S.~\ref{subsubsec:static}  ---} \\

    observed Markov~\eqref{assumption:mc} &  S.~\ref{subsubsec:finite outcome} & S.~\ref{subsec:cav} &  & \multicolumn{2}{c}{--- S.~\ref{subsec:cav} ---} &  & \multicolumn{2}{c}{--- S.~\ref{subsec:cav} ---} &  \\
    
    hidden Markov~\eqref{assumption:hmm} & S.~\ref{subsubsec:finite outcome}  &   & S.~\ref{subsec:hmm} &  &   & S.~\ref{subsec:hmm} &   &  & S.~\ref{subsec:hmm} \\

     additive dynamics~\eqref{assumption:linear} & S.~\ref{subsubsec:finite outcome}  &   &  &  S.~\ref{subsec:facct} & &   & S.~\ref{subsec:facct} &   &   \\

        \bottomrule
    \end{tabular}
\end{table}

\begin{table}[t]
    \centering
        \small
\caption{Summary of all surveyed enforcement results.}
    \label{tab:enforcement_summary}
\begin{tabular}{lll|ccc|ccc|ccc}
\toprule
\multicolumn{3}{l|}{\multirow{2}{*}{$\rho_t^\hori(\cdot; \fair)$}}                                                                                                                                          & \multicolumn{3}{c|}{$\fair_O$}       & \multicolumn{3}{c|}{$\fair_B$} & \multicolumn{3}{c}{$\fair_C$} \\ \cmidrule(r){4-12} 
\multicolumn{3}{l|}{}  & $\:\: \:\:  0  \:\: \:\: $ & $ \:\:  \:\: n \:\:  \:\: $ & $ \:\:  \:\: \infty \:\:  \:\: $ & $ \:\:  \:\: 0 \:\:  \:\: $ & $ \:\:  \:\: n \:\:  \:\: $ & $ \:\:  \:\: \infty \:\:  \:\: $ &  $ \:\: 0 \:\: $ & $ \:\: n \:\: $ & $ \:\: \infty \:\: $     \\ \midrule
\multicolumn{3}{l|}{no assumptions (\ref{ass:warm1}\&\ref{ass:warm2})}     & $\:$\cross$\:$ & \cross & \cross & \cross      & \cross     & \cross       & \multicolumn{3}{c}{S.~\ref{sec:process-agnostic-enforcement}}      \\
\multicolumn{3}{l|}{feasible intervals (\ref{assumption:interval_intersection}\&\ref{assumption:interval_intersection_padded})}                    & \multicolumn{3}{c|}{--- S.~\ref{sec:process-agnostic-enforcement} ---} & \multicolumn{3}{c|}{--- S.~\ref{sec:process-agnostic-enforcement} ---}       & \multicolumn{3}{c}{S.~\ref{sec:process-agnostic-enforcement}}       \\ \midrule
\multicolumn{1}{l|}{\multirow{4}{*}{\begin{tabular}[l|]{@{}l@{}} known \\ dynamics \\ (\ref{assumption:known_dynamics})\end{tabular}}} & \multicolumn{1}{l|}{\multirow{3}{*}{\begin{tabular}[l]{@{}l@{}} static \\ system (\ref{assumption:all_coins}) \end{tabular}}} & finite (\ref{assumption:enf:finite})     & S.~\ref{sec:finite-prob-guarantee} &   &   & \multicolumn{3}{c|}{Trivial}       &  \multicolumn{3}{c}{Trivial}        \\ 
\multicolumn{1}{c|}{}  & \multicolumn{1}{l|}{}                                                                               & finite a.s. (\ref{assumption:enf:finite}\&\ref{assumption:stationary_known_det})   & S.~\ref{sec:finite-det-guarantee} &   &   &  \multicolumn{3}{c|}{Trivial}        & \multicolumn{3}{c}{Trivial}        \\
\multicolumn{1}{c|}{}                                                                          &                                                                 \multicolumn{1}{l|}{}        & periodic (\ref{assumption:enf:periodic}) & S.~\ref{sec:periodic-enf} &   &   &  \multicolumn{3}{c|}{Trivial}        &  \multicolumn{3}{c}{Trivial}        \\ \cmidrule(r){2-12}
\multicolumn{1}{c|}{}                                                                          & \multicolumn{2}{l|}{Dynamic}                                                       & S.~\ref{sec:enf_dynamic} &   &   &        &       &         &       &       &        \\
\bottomrule
\end{tabular}
\end{table}
%

%

\section{Conclusion}
We have presented an overview of the work on fairness at runtime. We approach this topic through the lens of a simplified coin setting. 
In this setting, we propose unified problem statements for both the fairness monitoring problem and the fairness enforcement problem.
We restate the assumptions and results of each surveyed paper and provide the intuition of how each algorithm can be used to solve the proposed problems.

\paragraph{Future work.}
We discussed fairness properties such as current and bias fairness. 
Those two properties are the two extremes of a spectrum, where current fairness considers only the last value, and bias fairness computes the average over the entire sequence. A natural extension would be to introduce and study forms of discounting to unify both under a single property.
Moreover, a clear research direction is to combine monitoring and shielding, thus allowing the enforcement of systems with unknown dynamics.

\begin{credits}
\subsubsection*{\ackname} This work is supported by the European Research Council under Grant No.: ERC-2020-AdG 101020093.

\subsubsection*{\discintname}
The authors have no competing interests to declare that are
relevant to the content of this article. 
\end{credits}
%
%
%
\bibliographystyle{splncs04}
\bibliography{references}

\appendix

\section{Proofs}
\label{sec:proofs_appendix}

\subsection{Process-agnostic enforcement (Proof of Lemma~\ref{lem:difference_fairness_step} and Theorems~\ref{thm:blind_bias_enforcement} and~\ref{thm:blind_outcome_enforcement})}

\begin{lemma*}[\ref{lem:difference_fairness_step}]
    Let $w = (p_t,x_t)_{t\in\NN}$ be a sequence of coins and outcomes, and let $\fair\in\{\fair_O, \fair_B\}$.
    For all $t\in\geq 2$ we have 
    $\left|\fair(w_{1:t}) - \fair\left(w_{1:(t-1)}\right)\right| \leq 1/t$.
\end{lemma*}
\begin{proof}
    We prove it for $\fair = \fair_O$, the proof for $\fair_B$ follows the same argument.
    Let $\xi = \sum_{i=1}^{t-1} x_i$. Then
    \begin{equation}
        \left|\fair(w_{1:t}) - \fair\left(w_{1:(t-1)}\right)\right| = 
        \frac{\xi+x_t}{t} - \frac{\xi}{t-1} = \frac{x_t\cdot t - (x_t+\xi)}{t(t-1)}.
    \end{equation}
    If $x_t = 0$, since $\xi \leq t-1$, we have 
    \begin{equation}
        \left|\fair(w_{1:t}) - \fair\left(w_{1:(t-1)}\right)\right| = \frac{\xi}{t(t-1)} \leq 1/t.
    \end{equation}
    If $x_t = 1$, since $t-(\xi+1)\leq t-1$, we have
    \begin{equation}
        \left|\fair(w_{1:t}) - \fair\left(w_{1:(t-1)}\right)\right| = \frac{t-(\xi+1)}{t(t-1)} \leq 1/t.
    \end{equation}
    \hfill $\qed$
\end{proof}

\begin{theorem*}[\ref{thm:blind_bias_enforcement}]
    Under Assumption~\ref{assumption:interval_intersection},
    the enforcer defined for all
    $w\in\Wdomain^*$ as $\Enf(\Wvar) = (p_\cap,0)$,
    where $p_\cap\in \cap_{t} I_t$ solves Problem~\ref{prob:enforcement}.
\end{theorem*}
\begin{proof}
    Since the enforcer keeps the bias constant to $p_\cap$, bias fairness is trivially $\fair_B(\Wvar_{1:t}) = p_\cap$ for any sequence $\Wvar_{1:t}$ produced by the enforcer.
    \hfill $\qed$
\end{proof}

\begin{theorem*}[\ref{thm:blind_outcome_enforcement}]
    The enforcer in Eq.~\eqref{eq:bruteforce_outcome_enforcement} solves Problem~\ref{prob:enforcement} under Assumption~\ref{assumption:interval_intersection_padded}.
\end{theorem*}
\begin{proof}
    We prove this result by induction on $t$. 
    \paragraph{Base case.} For $t=1$, the target interval is $I_1 = [0,1]$.
    \paragraph{Inductive case.} Consider an enforced sequence $\Xvar_{1:(t-1)}$. 
    By the induction hypothesis $\fair_O(\Xvar_{1:(t-1)}) \in [\max(0,p-1/(t-1)), \min(1,p+1/(t-1))]$.
    Because of Lemma~\ref{lem:difference_fairness_step}, the only way to obtain $\fair_O(\Xvar_{1:t})$ outside of the target interval is with $\fair_O(\Xvar_{1:(t-1)}) \leq p$ and $x_t=0$, or with $\fair_O(\Xvar_{1:(t-1)}) \leq p$ and $x_t=0$. 
    We consider this two cases separately.
    \begin{itemize}
        \item If  $\fair_O(\Xvar_{1:(t-1)}) \leq p$ and $x_t=0$, then $\fair_O(\Xvar_{1:t}) \leq p$, so $\Enf$ enforces $x_t=1$. 
        The resulting sequence satisfies 
        \[
        \fair_O(\Xvar_{1:t})  \leq \fair_O(\Xvar_{1:(t-1)}, 1) +1/t \leq p+1/t.
        \]
        \item Analogously, if $\fair_O(\Xvar_{1:(t-1)}) > p$ and $x_t=1$, then  $\fair_O(\Xvar_{1:t}) > p$, so $\Enf$ enforces $x_t=0$.
        The resulting sequence satisfies
        \[
            \fair_O(\Xvar_{1:t})  \geq \fair_O(\Xvar_{1:(t-1)}, 1) -1/t > p+1/t.
        \]
    \end{itemize}
    \hfill $\qed$
\end{proof}

\subsection{Static coin monitoring (proof of Theorem~\ref{thrm:static})}

\begin{theorem*}[\ref{thrm:static}]
    The monitor described in Section~\ref{subsubsec:static} solves Problem~\ref{prob:monitoring} under Assumption~\ref{assumption:all_coins}.
\end{theorem*}
\begin{proof}
    Let $\Wvar_{1:t} =\Wvar_1, \dots, \Wvar_t$ be the trace observed up to time $t$.

   \emph{All but outcome fairness.}
   We know that $[\regE_t-\error_t, \regE_t+\error_t]$ forms a confidence interval for $\error_t=\error_t^p$ and a confidence sequence $\error_t=\error_t^u$ for $\expe(\Xrandomvar_1)=\Pvar$. This is a direct consequence of Hoeffding's inequality~\cite{hoeffding1963} and the result of Howard et al.~\cite{howard2021time}.
Because $\Pvar_t$ is unchanging and not subject to randomness we know that taking the expected $\Prandomvar_t$ will almost surely be $\Pvar$ irrespective on what we condition on. Hence, this implies that for all $t\in\pNN$ and all $\hori\in \NN$ 
\begin{align*}
   \Pvar = \expe(\Prandomvar_{t+\hori} \mid \Wvar_{1:t}) = \expe(\frac{1}{n+t}\sum_{i=1}^{t+\hori} \Pvar_i \mid \Wvar_{1:t})  .
\end{align*}
Moreover, by the law of large numbers we know that $\lim_{t\to\infty} \frac{1}{t}\sum_{i=1}^t\Xrandomvar_i=\expe(\Xrandomvar_1)=\Pvar$ almost surely, thus
\begin{align*}
   \Pvar =  \expe( \lim_{k\to\infty} \frac{1}{k}\sum_{i=1}^k\Xrandomvar_i \mid \Wvar_{1:t})  = 
   \expe(  \lim_{k\to\infty} \frac{1}{k}\sum_{i=1}^k\Prandomvar_i \mid \Wvar_{1:t})  = 
      \expe( \lim_{k\to\infty} \Prandomvar_k \mid \Wvar_{1:t}) .
\end{align*}

   \emph{Outcome Fairness.}
    For $\fair=\fair_O$ the register value $\regE_t$ equals the value of outcome fairness at runtime, i.e.,  
   \begin{align*}
      \regE_t = \frac{1}{t}\sum_{i=1}^t \Xvar_i= \expe\left(\frac{1}{t}\sum_{i=1}^t \Xvar_i \; \middle|\;  \Wvar_{1:t}\right) 
   \end{align*}
   Moreover, a direct consequence of the conditional expectation is that for all $t\in\pNN$ and all $\hori\in \NN$  
   \begin{align*}
       \expe(  \frac{1}{t}\sum_{i=1}^{\hori+t} \Xrandomvar_i  \mid \Wvar_{1:t} ) = \sum_{i=1}^t \Xvar_i + \sum_{i=t+1}^{t+\hori} \Pvar_i =   \sum_{i=1}^t \Xvar_i + \hori \Pvar
   \end{align*}
   which is estimated by 
   \begin{align*}
    [(t\cdot\regE_t + \hori \cdot (\regE_t-\error_t))/(t+\hori), (t\cdot\regE_t + \hori \cdot (\regE_t+\error_t))/(t+\hori)],
\end{align*}
following from the soundness of the error bound and the unchanging bias.
\hfill $\qed$
\end{proof}

\subsection{Observable Markov chain monitoring (proof of Theorem~\ref{thrm:mc})}

\label{appx:mc}
For a bias $\Pvar$ and outcome $\Xvar$, we denote $\eta(\Pvar,\Xvar)=\Pvar^{\Xvar} (1-\Pvar)^{1-\Xvar}$.
Let $\Pdomain=\{\Pvar^{(1)}, \dots,\Pvar^{(n)} \}$.
We can encode the stochastic process $\Wrandomvar$ generated by a dynamics function $\env$ as a Markov chain $\mc=(\Sdomain, M, \lambda)$ consisting of the state space $\Sdomain= [n]\cup[n]\times \{\tail,\head\}$, the Markov transition kernel  $M\colon \Sdomain \to \distr(\Sdomain)$, and the initial distribution $\lambda\in \distr(\Sdomain)$. The initial distribution is defined for every state $s \in \Sdomain$ as 
\begin{align*}
    \lambda(s)=
    \begin{cases}
         \env(\epsilon)(\Pvar^{(s)}) &\quad \text{if } s\in [n]  \\
         0 &\quad \text{otw.}
    \end{cases}
\end{align*}
and $0$ everywhere else.
The transition function $M\colon \Sdomain \to \distr(\Sdomain)$ is defined for every state $s\in \Sdomain$ and every successor state $s\in \Wdomain$ as
\begin{align*}
    M(s, s')=
    \begin{cases}
       \eta(\Pvar^{(k)}, \Xvar)  &\quad \text{if } s=(k)\in [n] \text{ and } s'=(k, \Xvar)\in [n]\times \{\tail,\head\} \\
        \env(\Pvar^{(k)}, \Xvar)(\Pvar^{(k')})  &\quad \text{if } s=(k,\Xvar)\in [n]\times \{\tail,\head\} \text{ and } s'=k'\in [n] \\
        0   &\quad \text{otw.} 
    \end{cases}
\end{align*}
Given a realization $\Wvar\in \Wdomain^{\omega}$ of $\Wrandomvar$, the corresponding realization of the Markov chain $(s_t)_{t\in \NN}$ is such that for every $t\in \pNN$, we have $s_{2t-1}= \Pvar_{t}$ and $s_{2t}=  \Pvar_{t}, \Xvar_t$. 
 The monitors developed in Henzinger et al.~\cite{henzinger2023monitoring} monitor time invariant expressions over Markov chains starting conditioned on some state. We need to encode our dynamic fairness properties in their formalism.

 \begin{lemma}
    \label{lemma:mc:current}
     Current fairness with time horizon $\hori\in \NN$ can be encoded as an arithmetic expression over transition probabilities of $\mc$ 
 \end{lemma}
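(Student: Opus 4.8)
The plan is to show that the current fairness with horizon $\hori$, evaluated at time $t$ conditioned on the observed state, is a polynomial expression in the single-step transition probabilities of the Markov chain $\mc$ defined above. Recall that $\run_t^\hori(\Wrandomvar; \fair_C) = \expe_\env(\Pvar_{t+\hori} \mid \Wvar_{1:t})$, and since the process is a Markov chain over $\Wdomain$, this conditional expectation depends only on the current bias-outcome pair $\Wvar_t = (\Pvar^{(k)}, \Xvar)$, not on the full history. So it suffices to deploy a separate monitor for each of the finitely many states $\Wvar = (\Pvar^{(k)}, \Xvar) \in \Wdomain$ and select the verdict corresponding to the currently observed state, exactly as sketched in Example~\ref{ex:cav}.

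First I would fix a current state $s = (k, \Xvar) \in [n] \times \{\tail,\head\}$ in the Markov chain $\mc$ and write out $\expe_\env(\Pvar_{t+\hori} \mid \Wvar_t = (\Pvar^{(k)}, \Xvar))$ by expanding over all paths of length $2\hori$ in $\mc$ starting from $s$. Each such path alternates between ``bias'' states in $[n]$ and ``bias-outcome'' states in $[n]\times\{\tail,\head\}$, and its probability is a product of the entries $M(\cdot,\cdot)$ along the path, each of which is either a factor of the form $\env(\Pvar^{(j)},\Xvar')(\Pvar^{(j')})$ (a transition probability of $\mc$) or a factor $\eta(\Pvar^{(j)},\Xvar') = (\Pvar^{(j)})^{\Xvar'}(1-\Pvar^{(j)})^{1-\Xvar'}$. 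The latter are constants (the coin biases are fixed reals, known once we fix which coin we are in — note the labels are observed), and $\Pvar_{t+\hori}$ takes the value $\Pvar^{(j)}$ on paths ending in coin $j$. Summing $\Pvar^{(j)}$ weighted by the path probability over all length-$2\hori$ paths from $s$ gives a finite sum of finite products of transition probabilities and constants — i.e., an arithmetic expression over transition probabilities of $\mc$, of the form covered by the specification language of~\cite{henzinger2023monitoring}. Concretely, for $\hori=1$ this is $\sum_{j=1}^n \Pvar^{(j)} \cdot \env(\Pvar^{(k)},\Xvar)(\Pvar^{(j)})$, matching the expression in Example~\ref{ex:cav}; for general $\hori$ one iterates the transition matrix $\hori$ times.

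The argument is essentially a bookkeeping unrolling of the Markov property, so I do not expect a serious obstacle; the only point requiring slight care is making the correspondence between the two-layer Markov chain $\mc$ (which separates the ``draw a bias'' step from the ``toss the coin'' step) and the one-step fairness horizon precise — namely, that advancing the horizon by one in the original process corresponds to traversing \emph{two} edges in $\mc$ — and making sure that the monitor reads the coin labels (guaranteed by Assumption~\ref{assumption:mc}) so that the $\eta$-factors and the value $\Pvar^{(j)}$ are genuine constants rather than hidden quantities. One then concludes by invoking the monitor of~\cite{henzinger2023monitoring} on this expression, one instance per state $\Wvar\in\Wdomain$, which is the content of Theorem~\ref{thrm:mc}.
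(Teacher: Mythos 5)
Your overall strategy is the same as the paper's: condition on the current state, use the Markov property to drop the history, unroll the conditional expectation $\expe(\Prandomvar_{t+\hori}\mid \Wvar_t)$ as a sum over length-$2\hori$ paths in the two-layer chain $\mc$, and observe that each path's probability is a product of entries of $M$. Your remark that advancing the fairness horizon by one corresponds to traversing two edges of $\mc$ is exactly the bookkeeping the paper carries out.

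There is, however, one step that does not go through as you state it. You weight each terminal path by the ``constant'' $\Pvar^{(j)}$ and justify treating the biases and the $\eta$-factors as known constants by the fact that the labels are observed. Under Assumption~\ref{assumption:mc} the monitor observes only the label $k$, not the value $\Pvar^{(k)}$: the set $\Env$ ranges over all dynamics functions inducing a finite irreducible chain, and the bias values are part of what is unknown (if they were known, current fairness with horizon $0$ would be known exactly and there would be nothing to monitor). So an expression containing $\Pvar^{(j)}$ as a literal coefficient is neither ``an arithmetic expression over transition probabilities of $\mc$'' in the sense required by the lemma nor evaluable by the monitor of~\cite{henzinger2023monitoring}. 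The missing (one-line) observation is that the terminal bias is itself a transition probability of $\mc$, namely $\Pvar^{(j)}=\eta(\Pvar^{(j)},\head)=M\big(j,(j,\head)\big)$, and likewise every $\eta$-factor along the path equals $M\big(j,(j,\Xvar')\big)$. This is precisely why the paper's expression $\psi_{\hori}(\Wvar_1)$ ends with the factor $M(s_{2\hori},(s_{2\hori},1))$ rather than with a numeric weight. With that substitution your unrolling coincides with the paper's proof.
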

\begin{proof}
    Assume we are given $\Wvar_1=(\Pvar^{(k)}, \Xvar)\in \Wdomain$, then we can encode the current fairness at a horizon of $\hori\in \NN$ starting in $\Wvar_1$ as an expression over transition probabilities of the Markov chain $\mc$ constructed above. 
    This expression is
    \begin{align*}
       \psi_{\hori}(\Wvar_1) &=\sum_{s_2, \dots, s_{2\hori}} M(\Wvar, s_2)\prod_{i=1}^{2\hori-1} M(s_i, s_{i+1}) \cdot M(s_{2\hori}, (s_{2\hori},1)) .
    \end{align*}
    where $\psi_0(\Wvar_1)= M(\Pvar^{(k)}, \Wvar_1)$.
    It is easy to see that this equals $\expe(\Prandomvar_{1+\hori}\mid \Wrandomvar_1=\Wvar)$, i.e., 
    \begin{align*}
        &\expe(\Prandomvar_{1+\hori}\mid \Wrandomvar_1=\Wvar) = \sum_{\Wvar_{2}, \dots, \Wvar_{\hori+1}} \Pvar_{\hori+1} \prob(\Wvar_{2}, \dots, \Wvar_{\hori+1} \mid \Wrandomvar_1=\Wvar)
        \\
        &= \sum_{\Wvar_{2}, \dots, \Wvar_{\hori+1}}
        \env(\Wvar)(\Pvar_2) \cdot \prod_{i=2}^{\hori}\left(\eta(\Pvar_{i}, \Xvar_{i}) \env(\Wvar_i)(\Pvar_{i+1})\right)\cdot \eta(\Pvar_{\hori+1},1) \\
     &= \sum_{\Wvar_{2}, \dots, \Wvar_{\hori+1}}
        M(\Wvar,\Pvar_2) \cdot \prod_{i=2}^{\hori}\left(M(\Pvar_{i}, \Wvar_{i})\right)\cdot  \prod_{i=2}^{\hori}\left( M(\Wvar_i,\Pvar_{i+1})\right)\cdot M(\Pvar_{\hori+1}, (\Pvar_{\hori+1},1)) \\
        &= \sum_{s_2, \dots, s_{2\hori}}
        M(\Wvar,s_2) \cdot \prod_{\substack{i=2 \\ i\ \text{even}}}^{2\hori}\left(M(s_{i}, s_{i+1})\right)\cdot  \prod_{\substack{i=2 \\ i\ \text{odd}}}^{2\hori}\left( M(s_i,s_{i+1})\right)\cdot M(s_{2\hori+1}, (s_{2\hori+1},1)) \\
        &=\sum_{s_2, \dots, s_{2\hori}} M(\Wvar, s_2)\prod_{i=1}^{2\hori-1} M(s_i, s_{i+1}) \cdot M(s_{2\hori}, (s_{2\hori},1)) .
    \end{align*}
    \hfill $\qed$
\end{proof}

\begin{lemma}
    \label{lemma:bias}
     Expected bias fairness with time horizon $\hori\in \NN$, without the history, can be encoded as an arithmetic expression over transition probabilities of $\mc$ 
 \end{lemma}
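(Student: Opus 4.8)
The plan is to derive the encoding from Lemma~\ref{lemma:mc:current} by linearity of conditional expectation. Fix a horizon $\hori\in\NN$, a time $t\in\pNN$, an observed history $\Wvar_{1:t}$, and let $\Wvar=\Wvar_t\in\Wdomain$ be the current bias-outcome pair. First I would unfold the runtime bias fairness and split it into the part fixed by the observed history and the part that still has to be predicted:
\begin{align*}
    \run_t^\hori(\Wrandomvar;\fair_B)
    &= \expe\!\left(\frac{1}{t+\hori}\sum_{i=1}^{t+\hori}\Prandomvar_i \;\middle|\; \Wvar_{1:t}\right)
    = \frac{1}{t+\hori}\left(\sum_{i=1}^{t}\Pvar_i \;+\; \sum_{j=1}^{\hori}\expe(\Prandomvar_{t+j}\mid \Wvar_{1:t})\right).
\end{align*}
The term $\sum_{i=1}^t \Pvar_i$ is completely determined by the observed coin labels; it is maintained in a register and, since it grows with $t$, is not (and cannot be) part of a fixed arithmetic expression over transition probabilities. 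The remaining term is exactly ``expected bias fairness without the history'', and this is what I would encode as an expression over $\mc$.

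Next I would use the Markov property of $\mc$ together with time-homogeneity: for each $1\le j\le \hori$ we have $\expe(\Prandomvar_{t+j}\mid \Wvar_{1:t}) = \expe(\Prandomvar_{t+j}\mid \Wvar_t=\Wvar) = \expe(\Prandomvar_{1+j}\mid \Wvar_1=\Wvar)$. By Lemma~\ref{lemma:mc:current}, each of these equals the arithmetic expression $\psi_j(\Wvar)$ over single-step transition probabilities of $\mc$ constructed there. Hence the history-free component equals
\begin{align*}
    \psi_\hori^B(\Wvar) \;=\; \sum_{j=1}^{\hori}\psi_j(\Wvar),
\end{align*}
a finite sum of arithmetic expressions over transition probabilities of $\mc$, hence itself such an expression, which proves the lemma. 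To close the loop with the monitor I would note, as in Example~\ref{ex:cav}, that one deploys one copy of $\psi_\hori^B$ per state $\Wvar\in\Wdomain$, reads off $\psi_\hori^B(\Wvar_t)$ for the currently observed state, and combines it with the register holding $\sum_{i=1}^t\Pvar_i$ and the scalar $1/(t+\hori)$ to recover $\run_t^\hori(\Wrandomvar;\fair_B)$ at runtime.

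The main obstacle is conceptual rather than computational, and it is precisely why the statement carries the ``without the history'' qualifier: the full runtime bias fairness is genuinely not expressible as a single arithmetic expression over transition probabilities, because it carries the additive, unbounded history term $\sum_{i=1}^t\Pvar_i$ and the $t$-dependent normaliser $1/(t+\hori)$. The work in the proof is to cleanly isolate the state-dependent predictive component and show it inherits encodability from Lemma~\ref{lemma:mc:current} by linearity; verifying that the monitor can reconstruct the full quantity at runtime from this expression plus cheap counter bookkeeping is then routine and is exactly the argument already sketched for Theorem~\ref{thrm:mc}.
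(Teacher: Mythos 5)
Your proof is correct and takes essentially the same route as the paper: both reduce expected bias fairness to a finite sum of the current-fairness expressions $\psi_j$ via linearity of expectation and the Markov property, deferring the realized prefix sum to runtime bookkeeping as in the proof of Theorem~\ref{thrm:mc}. The only (cosmetic) difference is that you isolate the purely horizon-dependent sum $\sum_{j=1}^{\hori}\psi_j(\Wvar)$ as the encoded expression, whereas the paper's $\chi_{\hori}$ keeps the $t$-dependent normaliser $1/(t+\hori)$ inside the expression; your bookkeeping is arguably the cleaner reading of ``time-invariant arithmetic expression.''
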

\begin{proof}
  As a direct consequence of Lemma~\ref{lemma:mc:current} and the linearity of expectation we know that bias fairness with horizon $\hori$ starting in $\Wvar_1$ at time $t\in \pNN$ equals
    \begin{align*}
    \chi_{\hori}(\Wvar_1)  =  \frac{1}{t+\hori}\sum_{i=1}^{t+\hori}  \psi_{i}(\Wvar_1) = \expe\left(\frac{1}{t+\hori}\sum_{i=1}^{t+\hori} \Pvar_i\;  \middle|\;  \Wvar_{1}\right).
    \end{align*}
    \hfill $\qed$
\end{proof}

\begin{theorem*}[\ref{thrm:mc}]
    The monitor described above solves Problem~\ref{prob:monitoring} under Assumption~\ref{assumption:mc}.
\end{theorem*}
\begin{proof}
    \emph{Current fairness.}
    We can monitor current fairness by deploying a monitor $\Mon_{(\Pvar^{(k)}, \Xvar)}^\hori$ for each bias-outcome pair $(\Pvar^{(k)}, \Xvar) \in \Wdomain$ and horizon $\hori$. Each monitor is synthesized for the time invariant expression $\psi_{\hori}((\Pvar^{(k)}, \Xvar))$. 
    During runtime we take a union bound over all monitors and use the fact that we can observe the label of the current coin to output the interval computed by the corresponding monitor. The soundness follows from 
    Lemma~\ref{lemma:mc:current} and the results of Henzinger et al.~\cite{henzinger2023monitoring}.

    \vspace{0.5em}
    \emph{Bias fairness.}
    Bias fairness with horizon $\hori\in \NN$ at time $t\in \pNN$ consists of two components, the realized and the expected part, i.e., 
    \begin{align*}
        \expe\left(\frac{1}{t+\hori}\sum_{i=1}^{t+\hori} \Pvar_i\;  \middle|\;  \Wvar_{1:t}\right) = \frac{1}{t+\hori}\sum_{i=1}^{t} \Pvar_i +  \expe\left(\frac{1}{t+\hori}\sum_{i=1}^{t+\hori} \Pvar_i\;  \middle|\;  \Wvar_{t}\right)
    \end{align*}
    From Lemma~\ref{lemma:bias} we know that we can monitor the expected part. We follow the same strategy as for current fairness. We deploy a monitor $\Mon_{(\Pvar^{(k)}, \Xvar)}^{(\chi,\hori)}$ for each bias-outcome pair $(\Pvar^{(k)}, \Xvar) \in \Wdomain$ and the expression $ \chi_{\hori}((\Pvar^{(k)}, \Xvar)) $, apply a union bound, and select the correct interval based on the current observed pair. 
    For the realized part we deploy a monitor $\Mon_{\Pvar^{(k)}}$ for the transition probability $M(\Pvar^{(k)}, (\Pvar^{(k)}, 1))$, which equals $\Pvar^{(k)}$. We perform a union bound to ensure all intervals hold.
    This allows us to compute the bounds
    \begin{align*}
        \left[\sum_{k\in [n]} C_t^{s}\cdot l_{t}^k,\sum_{k\in [n]} C_t^{s}\cdot u_{t}^k \right]
    \end{align*}
    where $l_{t}^k$ and $u_{t}^k$ are the lower and upper bounds computed by $\Mon_{\Pvar^{(k)}}$ at time $t$ and $C_t^{k}$ is the number of visit to the state $k\in [n]$
    To obtain the overall expression we combine both results, i.e., 
   \begin{align*}
        \left[\sum_{k\in [n]} C_t^{s}\cdot l_{t}^k + l ,\sum_{k\in [n]} C_t^{s}\cdot u_{t}^k +u\right]
    \end{align*}
    where $l$ and $u$ are the lower and upper bounds computed by $\Mon_{\Wvar_t}^{(\chi,\hori)}$ the currently selected monitor.

    \hfill $\qed$
\end{proof}

\subsection{Hidden Markov chain monitoring (proof of Theorem~\ref{thrm:hmm})}
\label{appx:hmm}
We use a Markov chain construction similar to the one in 
Section~\ref{appx:mc}.
We define the state space $\Sdomain= \Wdomain$, the initial distribution for every $\Wvar_1\in \Wdomain$ as
\begin{align*}
    \lambda(\Wvar_1)= \env(\epsilon)(\Pvar_1)\cdot \eta(\Pvar_1, \Xvar_1),
\end{align*}
and the Markov transition kernel for every $\Wvar_a, \Wvar_b \in \Wdomain$ as 
\begin{align*}
    M(\Wvar_a, \Wvar_b) = \env(\Wvar_a)(\Pvar_a)\cdot \eta(\Pvar_b, \Xvar_b).
\end{align*}
This Markov chain behaves identically to the stochastic process $\Wrandomvar$.
The Markov chain is hidden, because only $\Xvar$'s are observed, i.e., we define a label function $g\colon \Wdomain\to \{\tail, \head\}$ such that for every $(\Pvar, \Xvar)\in \Wdomain$ we have $g((\Pvar, \Xvar))= \Xvar$. 

\begin{theorem*}[\ref{thrm:hmm}]
    The monitor described above solves Problem~\ref{prob:monitoring} under Assumption~\ref{assumption:hmm}.
\end{theorem*}
\begin{proof}
    By the ergodic theorem for Markov chains (see Theorem 1.10.2~\cite{Norris_1997}) we know that for the stochastic process $\Wrandomvar$ we have 
    \begin{align*}
        \lim_{t\to\infty} \frac{1}{t}\sum_{i=1}^t \Xrandomvar_i = \sum_{(\Pvar, \Xvar)\in \Wdomain} \pi(\Pvar, \Xvar)\indi[\Xvar=\head] = \expe_{\pi}(\Xrandomvar) \quad \text{a.s}. 
    \end{align*}
    Moreover we can observe that
    \begin{align*}
       \expe_{\pi}(\Xrandomvar) &= \sum_{(\Pvar, \Xvar)\in \Wdomain} \pi(\Pvar, \Xvar)\indi[\Xvar=\head]
       = \sum_{\Pvar\in \Pdomain}  \pi(\Pvar,1)  = \sum_{\Pvar\in \Pdomain}  \prob_{\pi}(\Pvar,1) \\
       &= \sum_{\Pvar\in \Pdomain}  \prob_{\pi}(1\mid\Pvar) \prob_{\pi}(\Pvar)  
       =\sum_{\Pvar\in \Pdomain}  \Pvar( \pi(\Pvar, 0)  + \pi(\Pvar, 1))\\
       &= \expe_{\pi}(\Prandomvar)
    \end{align*}
    The monitors constructed in Henzinger et al.~\cite{henzinger2023partial} are designed for expressions such as $\expe_{\pi}(\Xrandomvar)$. The intervals computed by their monitors are sound. 
    \hfill $\qed$
\end{proof}

\subsection{Additive Dynamic Monitoring (proof of Theorem~\ref{thrm:dynamic})}
\begin{theorem*}[\ref{thrm:dynamic}]
    The monitor described in Section~\ref{subsec:facct} solves Problem~\ref{prob:monitoring} under Assumption~\ref{assumption:linear}.
\end{theorem*}
\begin{proof}
The monitor developed in Henzinger et al.~\cite{henzinger2023dynamic} is trivially sound for current fairness and predictive horizon $0$. 
Because it estimate the bias of the initial coin $\Pvar_1$, we can easily modify their monitor to capture the bias fairness measure as well. 
First, notice that the estimated value $\hat{\Pvar}_1$ and the observed outcomes can be used to estimate the bias at each point in time $t\in \NN$ by resolving the recursion, i.e., 
\begin{align*}
    \hat{\Pvar}_t = \hat{\Pvar}_1 +\sum_{i=1}^{t-1} \beta(\Xvar_i)
\end{align*}
Therefore, we can estimate bias fairness using the estimator
\begin{align*}
  \hat{\Pvar}_1 + \frac{1}{t}   \sum_{i=1}^t \sum_{j=1}^{i-1} \beta(\Xvar_i)
\end{align*}
Because we limit ourselves to additive dynamics, the error bounds remain the same. We can compute the additive shift recursively, i.e., we observe that 
\begin{align*}
     \sum_{i=1}^t \sum_{j=1}^{i-1} \beta(\Xvar_i)\sum_{j=1}^{t-1} (t-i)\cdot \beta(\Xvar_i) = \sum_{j=1}^{t} (t-i)\cdot \beta(\Xvar_i),
\end{align*}
thus we notice that 
\begin{align*}
    \sum_{j=1}^{t} (t-i)\cdot \beta(\Xvar_i) + \sum_{i=1}^{t} \beta(\Xvar_i) = \sum_{j=1}^{t} ((t+1)-i)\cdot \beta(\Xvar_i) = \sum_{j=1}^{t+1} ((t+1)-i)\cdot \beta(\Xvar_i).
\end{align*}
Hence, we simply maintain an additional register, which we increment using the value of $\regC_t$ at every time step.
\hfill $\qed$
\end{proof}

\subsection{Finite Time Window Enforcer with Probabilistic Guarantees (Proof of Theorem~\ref{thm:greedy_enforcers})}

For this enforcement problem, each bias-outcome sequence $w_{1:t}$, the only relevant information is the number of already tossed coins ($t$), and the number of coin tosses that have resulted in heads, which we denote as $h = \sum_{i=1}^t x_i$.
To decide whether to intervene in a coin toss or not, the enforcer following Eq.~\eqref{eq:best-effort-enforcer-general-delta} we computes the probability of $\env_p$ producing an outcome inside the required target interval at time $T$, conditioned on having seen a trace with $t$ tosses and $h$ heads.
This probability is denoted as 
$\prob_{\env_p}\big( \fair_O(\Wrandomvar_{1:T}) \in I_T \mid \Wvar_{1:t}\big)$ in our general framework, 
but for the purposes of this proof we will denote it as $\prob(t,h)$ to ease notation.
Following Eq.~\eqref{eq:best-effort-enforcer-general-delta}, the probabilistic enforcer with confidence parameter $\conf$ is defined, for a sequence $w_{1:t}$ with $t$ coins and $h$ tails as:
\begin{equation}
\label{eq:best-effort-enforcer-general-delta-proof}
    \Enf_\conf(w_{1:t}) = 
    \begin{cases}
        (\Pvar_t, \Xvar_t) & \mbox{ if  } \: \prob(t,h) \geq 1-\conf, \\
        \left(\Pvar_t, \arg\max_{\eps\in\{0,1\}} \prob(t,h+\eps)\right) & \mbox{ otherwise.}
    \end{cases}
\end{equation}
We call it the $\conf$-enforcer.
Let $\prob_\conf(t,h) = \prob_{\env_p, \Enf_\conf}\big( \fair_O(\Wrandomvar_{1:T}) \in I_T \mid \Wvar_{1:t}\big)$, i.e., the probability of the enforced process of producing an outcome inside the target interval.
We want to show that $\prob_\conf(0,0) \geq 1-\conf$.
We first show that the enforcer does not decrease the probability to obtain a fair outcome.
\begin{lemma}
    For all $t\in [0,T]$ and $h\in [0,t]$, it holds that
    $\prob_\conf(t,h) \geq \prob(t,h)$.
\end{lemma}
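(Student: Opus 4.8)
The plan is to prove the inequality by backward (downward) induction on $t$, from $t=T$ down to $t=0$, establishing $\prob_\conf(t,h)\geq\prob(t,h)$ for \emph{every} admissible head count $h\in[0,t]$ simultaneously at each level. The base case $t=T$ is immediate: once $T$ coins have been tossed the outcome fairness equals $h/T$ and no further intervention is possible, so $\prob_\conf(T,h)=\prob(T,h)=\indi[h/T\in I_T]$ and in fact equality holds. (It is worth noting first that the enforced process restricted to the summary statistics $(t,h)$ is a finite, acyclic Markov chain, so both $\prob$ and $\prob_\conf$ satisfy well-defined one-step recursions.)

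For the inductive step, fix $t<T$ and assume $\prob_\conf(t+1,\cdot)\geq\prob(t+1,\cdot)$ pointwise. First I would record the one-step recursion for the unenforced process, namely $\prob(t,h)=p\,\prob(t+1,h+1)+(1-p)\,\prob(t+1,h)$, which holds because coin $t+1$ has fixed bias $p$. Then I would derive the matching recursion for the enforced process: from the state with $t$ tosses and $h$ heads, the raw outcome of coin $t+1$ is a head with probability $p$ and a tail with probability $1-p$, yielding a would-be head count $h'\in\{h+1,h\}$; by Eq.~\eqref{eq:best-effort-enforcer-general-delta-proof} the enforcer keeps this outcome when $\prob(t+1,h')\geq 1-\conf$ and otherwise overwrites it so as to land in head count $h+m^\star$, where $m^\star=\arg\max_{\eps\in\{0,1\}}\prob(t+1,h+\eps)$. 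Collecting the two branches gives $\prob_\conf(t,h)=p\,Q(h+1)+(1-p)\,Q(h)$, where $Q(h')=\prob_\conf(t+1,h')$ if $\prob(t+1,h')\geq 1-\conf$ and $Q(h')=\prob_\conf(t+1,h+m^\star)$ otherwise.

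The crux is then the pointwise bound $Q(h')\geq\prob(t+1,h')$ for $h'\in\{h,h+1\}$. If $\prob(t+1,h')\geq 1-\conf$ this is immediate from the induction hypothesis. If $\prob(t+1,h')<1-\conf$, then $Q(h')=\prob_\conf(t+1,h+m^\star)\geq\prob(t+1,h+m^\star)=\max\{\prob(t+1,h),\prob(t+1,h+1)\}\geq\prob(t+1,h')$, where the first inequality is the induction hypothesis and the middle equality is the defining property of $m^\star$. Substituting both bounds into the enforced recursion gives $\prob_\conf(t,h)=p\,Q(h+1)+(1-p)\,Q(h)\geq p\,\prob(t+1,h+1)+(1-p)\,\prob(t+1,h)=\prob(t,h)$, which closes the induction. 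I expect the only delicate point to be bookkeeping rather than mathematics: writing the enforced transition in terms of the correct ``would-be'' head count and the correct argument of $\arg\max$, so that the case split in the last paragraph matches the enforcer definition exactly; once the two recursions are set up side by side, the monotonicity argument is short.
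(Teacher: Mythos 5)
Your proof is correct and follows essentially the same route as the paper's: backward induction on $t$, with the no-intervention branch handled by the induction hypothesis alone, and the intervention branch handled by combining the induction hypothesis with the fact that the $\arg\max$ choice dominates both successor probabilities and hence their convex combination. The only difference is bookkeeping: you evaluate the enforcer's trigger on the post-toss state $\prob(t+1,h')$, which lets the keep/overwrite branches mix within a single step, whereas the paper's proof case-splits once on $\prob(t,h)$ at the pre-toss state; both renderings yield the identical inequality chain.
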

\begin{proof}
   We prove this result by induction descending on $t$.
   \paragraph{Base case.} If $t=T$, the probability to reach the target interval is either 0 or 1 independently of the enforcement, so the inequality is trivially satisfied as an equality.
   \paragraph{Inductive case.} We consider two cases, depending on whether $\prob(t,h)\geq 1-\conf$.
   \begin{itemize}
       \item If $\prob(t,h)\geq 1-\delta$, the enforcer does not intervene, so
       \[
       \prob(t,h)_\delta = p\cdot \prob_\conf(t+1,h+1) + (1-p)\cdot \prob_\conf(t+1,h).
       \]
       By the induction hypothesis, both $\prob_\conf(t+1,h+1)$ and $\prob_\conf(t+1,h)$ are greater than their unenforced counterparts, so
       \[
       \prob(t,h)_\delta \geq p\cdot \prob(t+1,h+1) + (1-p)\cdot \prob_(t+1,h) = \prob(t,h).
       \]
       \item If $\prob(t,h) < 1-\conf$, then the enforcer intervenes deterministically at this sequence, so we have 
       \begin{equation}
       \label{eq:aux7}
           \prob_\conf(t,h) = \prob_\conf(t+1,h+\eps),
       \end{equation}
       where $\eps\in\{0,1\}$.
       By the induction hypothesis, we know that
       \begin{equation}
       \label{eq:aux8}
       \prob_\conf(t+1, h+\eps) \geq \prob(t+1, h+\eps). 
       \end{equation}
       Since $\eps\in\{0,1\}$ is chosen to maximize $\prob(t+1,h+\eps)$, it is also the case that 
       \begin{equation}
           \label{eq:aux9}
            \prob_\conf(t+1, h+\eps) \geq \prob(t+1, h+(1-\eps)).   
       \end{equation}
       Putting Equations~\eqref{eq:aux7},~\eqref{eq:aux8} and~\eqref{eq:aux9} together, we have
       \begin{equation*}
            \prob_\conf(t,h) \geq p_\eps\cdot \prob(t+1, h+\eps) + (1-p_\eps)\cdot \prob(t+1, h+(1-\eps) = \prob(t,h),
       \end{equation*}
       where $p_\eps = p$ if $\eps=1$ and $p_\eps = 1-p$ otherwise.
   \end{itemize}
   \hfill $\qed$
\end{proof}

\begin{theorem*}[\ref{thm:greedy_enforcers}]
    The enforcer described in Equation~\eqref{eq:best-effort-enforcer-general-delta} solves Problem~\ref{prob:enforcement} under Assumptions~\ref{assumption:enf:finite} and \ref{assumption:stationary_known_det}.
\end{theorem*}
\begin{proof}
    We will prove the following stronger result: for all $t\in [0,T]$ and $h\in [0,t]$, 
    if $\prob(t,h)>0$, then it holds that
    $\prob_\conf(t,h)\geq 1-\conf$.
    This result directly implies that whenever the enforcement problem is feasible (i.e., $\prob(t,h)>0$), then the presented enforcer guarantees the given confidence value. 
    
    We prove this result also by induction on $t$.
    \paragraph{Base case.} If $t=T$, $\prob(T,h)$ is either $0$ or $1$, so if it is not zero, it is greater or equal than $1-\conf$ for any $\conf\in [0,1]$.
    \paragraph{Inductive case.}
    We consider two cases, depending on whether $\prob(t,h)\geq 1-\conf$.
    \begin{itemize}
        \item If $\prob(t,h)\geq 1-\conf$, the enforcer does not intervene, so by the previous lemma we have $\prob(t,h)_\conf \geq \prob(t,h)\geq 1-\conf$.
        \item If $\prob(t,h) < 1-\delta$, the enforcer intervenes, so $\prob_\conf(t,h) = \prob_\conf(t+1,h+\eps)$ for $\eps\in\{0,1\}$.
        If $\prob(t,h) >0$, then also $\prob_\conf(t+1,h+\eps) > 0$, and by the induction hypothesis $\prob_\conf(t+1,h+\eps) \geq 1-\delta$.
    \end{itemize}
    \hfill $\qed$
\end{proof}

\end{document}